\newtheorem{theorem}{Theorem}
\newtheorem{remark}{Remark}
\DeclareMathOperator*{\argmax}{arg\,max}
\DeclarePairedDelimiter\floor{\lfloor}{\rfloor}
\DeclareMathOperator{\Span}{span}
\def\*#1{\mathbf{#1}}
\newcommand{\gdrop}[1]{{\texttt{gDROP}}}
\newcommand{\drop}[1]{{\texttt{DROP}}}
\newcommand{\LSS}{{\texttt{LSS}}}
\newcommand{\bSv}{\mathbf{S}^{(\text{v})}}
\newcommand{\bSw}{\mathbf{S}^{(\text{w})}}
\newcommand{\bLambda}{\mathbf{\Lambda}}
\DeclareOldFontCommand{\rm}{\normalfont\rmfamily}{\mathrm}
\DeclareOldFontCommand{\sf}{\normalfont\sffamily}{\mathsf}
\DeclareOldFontCommand{\tt}{\normalfont\ttfamily}{\mathtt}
\DeclareOldFontCommand{\bf}{\normalfont\bfseries}{\mathbf}
\DeclareOldFontCommand{\it}{\normalfont\itshape}{\mathit}
\DeclareOldFontCommand{\sl}{\normalfont\slshape}{\@nomath\sl}
\DeclareOldFontCommand{\sc}{\normalfont\scshape}{\@nomath\sc}
\renewcommand{\hat}[1]{\widehat{#1}}
\renewcommand{\tilde}[1]{\widetilde{#1}}
\begin{document}
\begin{acronym}[list]
\acro{drop}[DROP]{Dominant Reachable and Observable subspace-based Projection}
\acro{EVD}{eigenvalue decomposition}
\acro{LTI}{linear time-invariant}
\acro{lss}[LSS]{linear structured system}
\acroplural{lss}[LSS]{linear structured systems}
\acro{MIMO}{multiple-input-multiple-output}
\acro{MOR}{model order reduction}
\acro{PDE}{partial differential equation}
\acro{pod}[POD]{proper orthogonal decomposition}
\acro{rbm}[RBM]{reduced basis method}
\acro{ROM}{reduced-order model}
\acro{SISO}{single-input-single-output}
\acro{SVD}{singular value decomposition}
\end{acronym}


\title{Active Sampling of Interpolation Points to Identify Dominant Subspaces for Model Reduction}
  
\author[$1$]{Celine Reddig}
\affil[$1$]{Max Planck Institute for Dynamics of Complex Technical Systems, 39106 Magdeburg, Germany.\authorcr
  \email{creddig@mpi-magdeburg.mpg.de}, \orcid{0000-0001-9312-777X}}
  
\author[$2$]{Pawan Goyal}
\affil[$2$]{Max Planck Institute for Dynamics of Complex Technical Systems, 39106 Magdeburg, Germany.\authorcr
  \email{goyalp@mpi-magdeburg.mpg.de}, \orcid{0000-0003-3072-7780}}

\author[$3$]{Igor Pontes Duff}
\affil[$3$]{Max Planck Institute for Dynamics of Complex Technical Systems, 39106 Magdeburg, Germany.\authorcr
  \email{pontes@mpi-magdeburg.mpg.de}, \orcid{0000-0001-6433-6142}}

\author[$4$]{\\Peter Benner}
\affil[$4$]{Max Planck Institute for Dynamics of Complex Technical Systems, 39106 Magdeburg, Germany.\authorcr
  \email{benner@mpi-magdeburg.mpg.de}, \orcid{ 0000-0003-3362-4103}}
  
\shorttitle{Active sampling of interpolation points and model reduction}
\shortauthor{Reddig et al.}
\shortdate{}
  
\keywords{Model reduction, linear structured systems, transfer functions, active sampling, reachability and observability, interpolation, reduced-order models.}

  
\abstract{Model reduction is an active research field to construct low-dimensional surrogate models of high fidelity to accelerate engineering design cycles. In this work, we investigate model reduction for linear structured systems using dominant reachable and observable subspaces.  When the training set---containing all possible interpolation points---is large, then these subspaces can be determined by solving many large-scale linear systems. However, for high-fidelity models, this easily becomes computationally intractable. To circumvent this issue, in this work, we propose an active sampling strategy to sample only a few points from the given training set, which can allow us to estimate those subspaces accurately. 
To this end, we formulate the identification of the subspaces as the solution of the generalized Sylvester equations, guiding us to select the most relevant samples from the training set to achieve our goals.  
Consequently, we construct solutions of the matrix equations in low-rank forms, which encode subspace information. We extensively discuss computational aspects and efficient usage of the low-rank factors in the process of obtaining reduced-order models. We illustrate the proposed active sampling scheme to obtain reduced-order models via dominant reachable and observable subspaces and present its comparison with the method where all the points from the training set are taken into account. It is shown that the active sample strategy can provide us $17$x speed-up without sacrificing any noticeable accuracy. 
}

\novelty{
\begin{itemize}
    \item Active sampling for interpolation points to approximate reachable and observable subspaces 
    \item Recasting identification of subspaces as generalized Sylvester equations
    \item Solving these equations to obtain solutions as low-rank by utilizing the underlying structure of Sylvester equations in our framework
    \item Demonstrated the efficiency of the proposed active learning, showing computational speed-up of up to $17$x.
\end{itemize}}

\maketitle



\section{Introduction}\label{sec:intro}
 Mathematical models are a crucial component in the design cycles of engineering projects. With the increasing complexity of dynamic processes, these models can become intricate and high-fidelity, especially when governed by \acp{PDE}. This can make optimization, control, and simulation computationally expensive. To simplify these models and make them more efficient for design cycles, \ac{MOR} provides a solution. It allows for the construction of low-dimensional models by projecting high-fidelity models onto a low-dimensional subspace. These low-dimensional models accurately capture important dynamics and properties of the high-fidelity models, making the engineering design process faster and more efficient.

Numerous \ac{MOR} techniques have been proposed in the literature for linear and nonlinear systems from many perspectives; see, e.g. \cite{morBenMS05,morBenOCetal17, morBenSGetal21v1}. In this work, we focus on system-theoretic \ac{MOR}, in which balanced truncation \cite{morMoo81} and interpolation-based methods \cite{morGri97,morGalVV04, morAntBG20} are two widely known techniques.  One of the key advantages of system-theoretic \ac{MOR} methods is that they do not require extensive simulation to generate data for various configurations beforehand. This feature is particularly useful in applications where the dynamical model possesses control inputs, and the number of possible control configurations may be infinite, thus making data generation infeasible. Furthermore, in this work, we pay special attention to  linear structured systems, i.e. dynamical systems exhibiting particular structures possibly involving, e.g. second-order derivatives, time-delays, and integro-differential operators. \ac{MOR} for these classes of systems from system-theoretic perspectives has been widely investigated, see, e.g. for second-order systems \cite{morChaGVetal05,morReiS08,morSaaSW19} and for time-delay systems  \cite{morLam93,michiels2011krylov,morJarDM13}. Furthermore, balanced truncation \cite{morBre16}, and interpolation-based methods  \cite{morBeaG09} have been extended for general linear structured systems.  Additionally, a data-driven approach for learning linear structured systems from the frequency domain data has been proposed in \cite{morSchUBG18}.

In this work, our primary objective is to construct \acp{ROM} for high-fidelity linear structured systems by efficiently identifying their dominant reachable and observable subspaces as proposed in \cite{benner2019identification}. The authors in \cite{benner2019identification} connect interpolation-based \ac{MOR} methods with the reachability and observability dominant subspaces associated with linear structured systems. The method involves sampling frequency points in a desired range, followed by solving large-scale linear systems at all sampled points. This allows for a good approximation of reachable and observable subspaces. Then, based on the joint extraction of dominant reachable and observable information, low-dimensional models are constructed via a Petrov-Galerkin projection. As shown in \cite{benner2019identification}, this methodology provides very accurate \acp{ROM} compared to standard methods from the literature. However, the major bottleneck of this method is the requirement to solve large-scale linear systems at each of the sampled frequency points. Therefore, in order to bypass this issue, our approach in this work is to study the problem of selecting a few relevant frequency subsamples and constructing the dominant subspaces based on them, thus drastically reducing the number of large-scale linear solves required. 
 
The problem of selecting relevant subsamples has been addressed in different \ac{MOR} contexts. In the reduced basis method (RBM), one seeks to reduce the computational cost of solving parametric \acp{PDE}, e.g. \cite{morQuaMN16, morHaa17}. RBM relies  on constructing a small set of reduced basis functions that can approximate the solution of the PDE over a range of parameter values. The reduced basis is chosen by sampling the solution of the PDE at a few number of selected parameter values and using these solutions to form a \ac{ROM}. The choices of parameters are typically guided by error estimators and the use of a greedy algorithm; see, e.g.  \cite{morHaaO08a, morDroHO12, morHaa17}. Additionally, a typical choice of error estimator is based on the residual equation \cite{morHaaO08a}. In the context of interpolation-based \ac{MOR},  error estimators for the transfer function have been proposed in \cite{morFenAB17,morFenB19b}. Based on them, a few frequency subsamples are selected and used to construct \acp{ROM} via Galerkin (one-sided) projection only. Moreover, recent work has been dedicated to efficiently sample the parameter domain such that the training set is adaptively updated  \cite{morCheFdetal21a, morCheFB22}.  In the view of numerical linear algebra, some work has been dedicated to find the solutions to parameter-dependent matrix equations, e.g.  Lyapunov equations \cite{morSonS17,morPrzV21, lazar2021greedy} and Riccati equations \cite{morSchH15}, where error estimators have also been used to select parameter subsamples actively.  Additionally, greedy-inspired methods have also been discussed to solve the generalized Sylvester equation \cite{kressner2015truncated} using rank-1 updates. 

In this work, a new active sampling approach is proposed to sample only a few points from a dense frequency training set while approximating the reachable and observable subspaces up to the desired accuracy.  For this purpose,  we formulate the identification of reachable and observable subspaces as the solution of a large-scale generalized Sylvester equation with an interpolatory structure and a dense training set for the shift points is considered. Despite advances  in matrix equation solvers \cite{BenS13,Sim16}, solving generalized Sylvester equations remains a challenging task, especially when the number of terms is more than two. Additionally, matrix equation solvers also do not take into account the specific structure that arises in matrix equations from interpolatory \ac{MOR} methods. Hence, in this work, we take advantage of the interpolatory structure from the generalized Sylvester equation to compute low-rank solutions based on selected frequency subsamples.  To this end, we actively sample frequency points from a densely sampled training set based on the residual of the underlying generalized Sylvester equations. And we obtain matrices in low-rank forms, encoding reachable and observable subspace information. The obtained low-rank solutions are then used to determine the dominant subspaces, allowing us to efficiently compute accurate low-dimensional models. 

Furthermore, we have discussed several computational aspects to speed up the residual computation--which may be expensive--and the efficient use of the low-rank form in the construction of \acp{ROM}. We also discuss a scheme for choosing  multiple samples from the training set to call the residual computation less frequently.

The remainder of the paper is organized as follows. In \Cref{sec:Preliminaries}, we introduce the considered class of linear structured systems and  briefly recall interpolatory \ac{MOR} methods. Additionally, we outline the dominant subspace procedure from \cite{benner2019identification} and discuss a connection with the generalized Sylvester equation. Then, we present our active sampling strategy in \Cref{sec:active_sampling}, showing how to determine the most active points which are likely to contribute the most to provide relevant information about reachable/observable subspaces. In \Cref{sec:CompAspects}, we discuss several computational aspects related to the proposed active sampling strategy. Finally, \Cref{sec:Numerics} presents several numerical examples illustrating the performance of our active sampling strategy  and compares it to the method in \cite{benner2019identification}, where we observe a speed-up of up to $12$x without sacrificing any noticeable accuracy.  We conclude the paper with a brief summary and future research avenues.
\section{Preliminaries}\label{sec:Preliminaries}
In this section, we introduce a few concepts that will set the basis for the rest of the paper. We begin by presenting the form of linear structured systems that are considered in this paper. 
\subsection{Linear structured systems}
In this paper, we consider linear structured systems (denoted by \LSS), whose transfer functions---defining input-output mapping--- are of the form:
\begin{align}\label{eq:structured_sys}
    \bH\left(s\right) = \bC\mathcal{K}\left(s\right)^{-1}\bB,
    \end{align}
where  ${\mathcal{K}}(s) \in \C^{n\times n}$, and $\bB \in \R^{n\times m}$, and $\bC\in \C^{p\times n}$ are constant matrices, and $s$ takes values on the imaginary axis, i.e. $s\in i\R$; $n$ is referred as the dimension of the underlying linear model, which is often of order $10^4{-}10^6$; $m$ is the number of input controls and $p$ corresponds to the number of outputs. For a clear exposition, from now on we will consider the single-input single-output (SISO) case, i.e. when $m = p =1$. 
However, all the following discussions in the paper can be readily extended to multiple-input multiple-output (MIMO)  using the concept of tangential interpolation \cite{morGalVV04}.    Besides, we also assume that $\bH(s)$ is a strictly proper function, i.e.  $\lim\limits_{s\rightarrow \pm i\infty}{\bH(s)} = 0$. 
Furthermore, in this paper, we assume that $\cK(s)$ can be written in an affine form as follows:
\begin{align}\label{eq:K_affine}
     \mathcal{K}\left(s\right) = f_1\left(s\right)\bA_1 + \cdots + f_l\left(s\right)\bA_l
\end{align}
in which $\bA_i \in \mathbb{R}^{n \times n}$ and the functions $f_i\left(s\right): \C \rightarrow \C, ~i\in\{1,\ldots,l\}$ are meromorphic functions, which, under some mild conditions guarantees that $\cK\left(s\right)$ is invertible almost everywhere \cite{benner2019identification}. Systems \eqref{eq:structured_sys} satisfying assumption \eqref{eq:K_affine} cover several linear structured systems such as time-delay systems, second-order systems, or integro-differential systems. For example, a time-delay system with a single delay can be given as 
\begin{equation}
\begin{aligned}
    \bE\dot{\bx}(t) &= \bA\bx(t) + \bA_\tau\bx(t-\tau) + \bB\bu(t),\qquad \bx(0) = 0\\
    \by(t) &= \bC\bx(t)
\end{aligned}
\end{equation}
with transfer function $\bH(s) = \bC\left(s\bE - \bA - e^{-\tau s}\bA_\tau\right)^{-1}\bB$. When compared with the form given in \eqref{eq:K_affine}, we obtain
\begin{equation}
    \begin{aligned}
    f_1(s) &= s, & f_2(s) &= -1,& f_3(s) & = -e^{-\tau s},\\
    \bA_1 &= \bE, & \bA_2 &= \bA,& \bA_3 & = \bA_\tau.
    \end{aligned}
\end{equation}
\subsection{Interpolation-based MOR for \LSS}
Next, we briefly outline the framework of interpolation-based \ac{MOR}  for \LSS ~\eqref{eq:structured_sys}. The goal is to find \acp{ROM} of much smaller dimensionality while having the same structure as \eqref{eq:structured_sys} and accuracy. We aim to achieve this goal by using a Petrov-Galerkin projection of the high-fidelity system \eqref{eq:structured_sys}. More precisely, we seek to obtain a \ac{ROM} of order $r$ whose transfer function has the following form:

\begin{equation}
\label{eq:redsys}
    \hat{\bH}\left(s\right) = \hat{\bC}\hat{\mathcal{K}}\left(s\right)^{-1}\hat{\bB}, 
\end{equation}
where $\hat{\mathcal{K}}(s) \in \C^{r\times r}$, $\hat\bB \in \R^{r\times m}$, and $\hat\bC\in \R^{p\times r}$ are obtained by Petrov-Galerkin projection given by
\begin{equation}
\label{eq:redmat}
    \hat{\bC} = \bC\bV , \quad \hat{\mathcal{K}}\left(s\right) = \bW ^{\top}\mathcal{K}\left(s\right)\bV , ~~\text{and}~~ \hat{\bB} = \bW ^{\top} \bB, 
\end{equation}
with $\bV \in \R^{n \times r}$ and $\bW\in \R^{n \times r}$ being projection matrices. The choices of matrices $\bV$ and $\bW$ should be such the error between the orginal transfer function $\bH(s)$ and the reduced one $\hat{\bH}\left(s\right)$ is small enough, i.e. $\|\bH(s) -\hat{\bH}(s)\| \ll 1$ for all $s\in i\R$.

From the perspective of interpolation-based methods, we determine $\bV$ and $\bW$ such that the transfer functions of the high-fidelity model $\bH(s)$ and reduced model $\hat{\bH}(s)$ match at a set of given interpolation points. Such a problem for linear structured systems has been considered in \cite{morBeaG09}. In the following, we recall interpolation results (in simplified form) from  \cite{morBeaG09}.
\begin{theorem}[\cite{morBeaG09}]\label{thm:interpolation_beattie}
Let $\bH\left(s\right)$ be a transfer function as in \eqref{eq:structured_sys}. Consider a set of interpolation points $\left\{\sigma_1,\dots,\sigma_r\right\}$ and $\left\{\mu_1,\dots,\mu_r\right\}$ such that $\mathcal{K}\left(s\right)$ is invertible at these interpolation points. Furthermore, let $\bV$ and $\bW$ be defined as follows:
\begin{subequations}
\begin{align}
\underset{i=1,\dots,r}{\Span}
    \left\{\mathcal{K}\left(\sigma_i\right)^{-1}\bB\right\} &\subseteq  \range{\bV},\\
    \underset{i=1,\dots,r}{\Span} \left\{\mathcal{K}\left(\mu_i\right)^{-\top}\bC^{\top}\right\} &\subseteq  \range{\bW}.
\end{align}
\end{subequations}
If the reduced matrices are computed as in \eqref{eq:redmat} using these $\bV$ and $\bW$, then the following interpolation conditions are satisfied:
\begin{align*}
    \bH\left(\sigma_i\right) &= \hat{\bH}\left(\sigma_i\right),\\
    \bH\left(\mu_i\right) &= \hat{\bH}\left(\mu_i\right).
\end{align*}
Moreover, if $\sigma_i = \mu_i$ for $i = {1,\ldots, r}$, then the derivatives also match, i.e.
\begin{align*}
    \dfrac{d}{ds}\bH\left(\sigma_i\right) &= \dfrac{d}{ds}\hat{\bH}\left(\sigma_i\right).
\end{align*}
\end{theorem}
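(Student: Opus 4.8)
First I would run the standard projection-based interpolation argument: establish the input and output conditions separately by a ``lifting'' identity relating full and reduced solutions, and then combine the two for the Hermite statement. The one genuinely delicate point, flagged below, will be well-posedness of the reduced model at the interpolation points.

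\emph{Input interpolation points.} Fix $\sigma_i$ and set $v_i := \mathcal{K}(\sigma_i)^{-1}\bB$, which lies in $\range{\bV}$ by assumption, so $v_i = \bV x_i$ for some $x_i$. Assuming $\hat{\mathcal{K}}(\sigma_i)$ is invertible, let $\hat{v}_i := \hat{\mathcal{K}}(\sigma_i)^{-1}\hat{\bB}$. The crux is to recognize $x_i$ as this reduced solution: substituting the definitions in \eqref{eq:redmat},
\[
\hat{\mathcal{K}}(\sigma_i)\,x_i = \bW^{\top}\mathcal{K}(\sigma_i)\bV x_i = \bW^{\top}\mathcal{K}(\sigma_i)v_i = \bW^{\top}\mathcal{K}(\sigma_i)\mathcal{K}(\sigma_i)^{-1}\bB = \bW^{\top}\bB = \hat{\bB},
\]
so $x_i = \hat{v}_i$, hence $\bV\hat{v}_i = v_i$. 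Multiplying by $\bC$ and using $\hat{\bC} = \bC\bV$ gives $\hat{\bH}(\sigma_i) = \hat{\bC}\hat{v}_i = \bC\bV\hat{v}_i = \bC v_i = \bH(\sigma_i)$.

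\emph{Output interpolation points.} I would repeat the argument on the transposed system. Set $w_i := \mathcal{K}(\mu_i)^{-\top}\bC^{\top} \in \range{\bW}$, write $w_i = \bW z_i$, and let $\hat{w}_i := \hat{\mathcal{K}}(\mu_i)^{-\top}\hat{\bC}^{\top}$ (assuming $\hat{\mathcal{K}}(\mu_i)$ invertible). Then, using $\hat{\mathcal{K}}(\mu_i)^{\top} = \bV^{\top}\mathcal{K}(\mu_i)^{\top}\bW$,
\[
\hat{\mathcal{K}}(\mu_i)^{\top} z_i = \bV^{\top}\mathcal{K}(\mu_i)^{\top}\bW z_i = \bV^{\top}\mathcal{K}(\mu_i)^{\top}w_i = \bV^{\top}\mathcal{K}(\mu_i)^{\top}\mathcal{K}(\mu_i)^{-\top}\bC^{\top} = \bV^{\top}\bC^{\top} = \hat{\bC}^{\top},
\]
so $z_i = \hat{w}_i$ and $\bW\hat{w}_i = w_i$. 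Transposing $\hat{\bH}(\mu_i) = \hat{\bC}\hat{\mathcal{K}}(\mu_i)^{-1}\hat{\bB}$ and inserting $\hat{\bB} = \bW^{\top}\bB$ yields $\hat{\bH}(\mu_i)^{\top} = \bB^{\top}\bW\hat{w}_i = \bB^{\top}w_i = \bB^{\top}\mathcal{K}(\mu_i)^{-\top}\bC^{\top} = \bH(\mu_i)^{\top}$, i.e.\ $\bH(\mu_i) = \hat{\bH}(\mu_i)$.

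\emph{Hermite conditions and the main obstacle.} When $\sigma_i = \mu_i =: s_0$, both lifting identities $\bV\hat{v}_i = v_i$ and $\bW\hat{w}_i = w_i$ are available. Differentiating $\bH(s) = \bC\mathcal{K}(s)^{-1}\bB$ with the matrix-inverse rule $\frac{d}{ds}\mathcal{K}(s)^{-1} = -\mathcal{K}(s)^{-1}\mathcal{K}'(s)\mathcal{K}(s)^{-1}$ gives $\bH'(s_0) = -w_i^{\top}\mathcal{K}'(s_0)v_i$, and analogously $\hat{\bH}'(s_0) = -\hat{w}_i^{\top}\hat{\mathcal{K}}'(s_0)\hat{v}_i$ with $\hat{\mathcal{K}}'(s_0) = \bW^{\top}\mathcal{K}'(s_0)\bV$, since differentiation commutes with the constant projection. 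Substituting the lifting identities,
\[
\hat{\bH}'(s_0) = -\hat{w}_i^{\top}\bW^{\top}\mathcal{K}'(s_0)\bV\hat{v}_i = -(\bW\hat{w}_i)^{\top}\mathcal{K}'(s_0)(\bV\hat{v}_i) = -w_i^{\top}\mathcal{K}'(s_0)v_i = \bH'(s_0).
\]
The only step that requires care is the standing assumption that $\hat{\mathcal{K}}(\sigma_i)$ and $\hat{\mathcal{K}}(\mu_i)$ are invertible (and that $\bV,\bW$ have full column rank, so the reduced quantities genuinely live in $\R^{r}$); I would either add this as an explicit hypothesis or justify it generically, noting that under \eqref{eq:K_affine} the reduced pencil $\hat{\mathcal{K}}(s) = \bW^{\top}\mathcal{K}(s)\bV$ is again an affine combination of meromorphic functions and hence invertible away from a discrete set, which the sampled points can be taken to avoid. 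Everything else is bookkeeping with \eqref{eq:redmat}; note that the SISO assumption is never used, so the identical computation yields the MIMO version (with matrix-valued $\bB,\bC$) verbatim.
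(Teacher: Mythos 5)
Your proof is correct: it is the standard Petrov--Galerkin lifting argument (recognize the reduced solve $\hat{\mathcal{K}}(\sigma_i)^{-1}\hat{\bB}$ as the coordinate vector of $\mathcal{K}(\sigma_i)^{-1}\bB$ in $\range{\bV}$, dually for $\bW$, and combine both liftings for the Hermite condition), which is exactly how this result is established in the cited reference \cite{morBeaG09}; the paper itself states the theorem by citation and gives no proof. Your flag about the implicit hypothesis that $\hat{\mathcal{K}}(\sigma_i)$ and $\hat{\mathcal{K}}(\mu_i)$ be invertible (and the remark that the argument carries over verbatim to full MIMO matrix interpolation) is apt and consistent with the standard statement of the result.
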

The previous theorem, therefore, enables us to construct a \ac{ROM} based on interpolation. Additionally, when $\cK(s)$ takes the affine form \eqref{eq:K_affine}, then the reduced $\hat\cK(s)$ also takes the same affine form:
\begin{align}\label{eq:K_affine_red}
     \hat{\cK}\left(s\right) = f_1\left(s\right)\hat\bA_1 + \cdots + f_l\left(s\right)\hat\bA_l
\end{align}
in which $\hat\bA_i = \bW^\top\bA_i\bV \in \mathbb{R}^{r \times r}$.

\begin{remark}\label{rem:close_conj} We assume that the high-fidelity realization \eqref{eq:structured_sys} is real. Therefore, in order to enforce that the obtained reduced model also has a real realization, the set of interpolation points $\cS = \left\{\sigma_1,\dots,\sigma_r\right\}$ is typically chosen to be closed under conjugation, i.e. if $\sigma \in \cS$ then $\overline{\sigma} \in \cS$, where $\overline{\sigma}$ denotes the complex conjugate of $\sigma$. In practice, while constructing the matrix $\bV$,  one needs to enforce that
\begin{align*}
\Span\;\left\{\mathcal{K}\left(\sigma\right)^{-1}\bB\right\} \subseteq \range{\bV} \quad \text{and} \quad \Span\;\left\{\mathcal{K}\left(\overline{\sigma}\right)^{-1}\bB\right\} \subseteq \range{\bV}
\end{align*}
so that the following interpolation conditions are satisfied
\[\bH\left(\sigma\right) = \hat{\bH}\left(\sigma\right) \quad \text{and} \quad  \bH\left(\overline{\sigma}\right) = \hat{\bH}\left(\overline{\sigma}\right).\] 
Henceforth, we assume that the set of interpolation points is always closed under conjugation.
\end{remark}

One of the main challenges in employing \Cref{thm:interpolation_beattie} is the choice of interpolation points, which determines the quality of the \acp{ROM}. For a particular class of linear systems,  
the choice of the interpolation points for a given order of \acp{ROM} can be based on the optimality conditions of the $\cH_2$-optimal \ac{MOR} problem \cite{morGugAB08}. Indeed, for linear systems, these optimality conditions are interpolatory conditions at specific points that are not known a priori. Therefore, those points are then learned through an iterative procedure. Since such methods are of iterative type, they may require many linear solves until they converge, provided they converge. Additionally, some extensions of the $\cH_2$-optimality conditions have been developed in the literature for \LSS~ with a specific structure, e.g. for second-order systems \cite{morBeaB14},  time-delay systems \cite{morPonGBetal16, morPonPS18},  and more general structured  systems \cite{morMli20}. 
However, their connections with interpolation-based projection methods still remain unclear.
Although choosing good interpolation points and order yet remains an open problem for \LSS, there is an alternative perspective from control theory, which connects interpolation-based subspaces with reachability and observability of \LSS. These ideas are presented in \cite{benner2019identification}, which are discussed in the following.


\subsection{Dominant Reachable and Observable Subspaces and Interpolation}\label{subsec:DROP}
Here, we present the core concepts from \cite{benner2019identification}, which connect interpolation-based \ac{MOR} with reachable/observable subspaces for \LSS~ followed by the identification of the dominant subspaces and by the truncation of the unreachable and unobservable subspaces. The main ingredient of the method is that the vectors---used to achieve interpolation; see \Cref{thm:interpolation_beattie}---also describe
reachable and observable subspaces. For a large enough number of interpolation points $\cN$, reachable and observable subspaces, respectively, can be encoded by the matrices $\bV$ and $\bW$ as follows:
\begin{subequations}\label{eq:intpol_main}
\begin{align}
    \label{eq:intpol}
     \bV & = \left[\mathcal{K}\left(\sigma_1\right)^{-1}\bB, \dots , \mathcal{K}\left(\sigma_\cN\right)^{-1}\bB\right],\\
     \label{eq:intpol2}
     \bW & = \left[\mathcal{K}\left(\sigma_1\right)^{-\top}\bC^{\top} , \dots , \mathcal{K}\left(\sigma_\cN\right)^{-\top}\bC^{\top}\right]
\end{align}
\end{subequations}
with
\begin{align*}
    \range{\bV} = \mathcal{V}_{\text{R}},\quad \text{and}\quad\range{\bW} = \mathcal{W}_{\text{O}},
\end{align*}
where $\cV_\text{R}$ and $\cW_{\text{O}}$, respectively, denote reachable and observable subspaces. Once we have these subspaces, we can determine dominant reachable and observable subspaces in a spirit similar to balanced truncation \cite{morMoo81} or the Loewner framework \cite{morMayA07,morAntGI16,morGosA18,morBenG21}. This is achieved by taking the \ac{SVD} of appropriate matrices. Then, \acp{ROM} are constructed using these dominant subspaces. The whole procedure is summarized in \Cref{alg:drop}, referred to as dominant reachable and observable subspace-based projection (\drop~).


%
Note that the number of interpolation points to describe reachable and observable subspaces has an upper bound, which is given by the order of the high-fidelity model. Computing at this many points using a high-fidelity model imposes a severe computational burden. However, we assume that there exists a low-dimensional model that can describe the dynamics of the high-fidelity system well. This implies that we can potentially construct reachable and observable subspaces with only a few interpolation points, which then raises the question of how to choose those few interpolation points. To address this, we aim to develop an active sampling strategy in \Cref{sec:active_sampling}  that allows us to achieve our goal, hence reducing the computational burden drastically. To do this, we first present a connection between the generalized Sylvester equation and the matrices $\bV$ and $\bW$ defined in \eqref{eq:intpol_main}, which is our key building block.

\begin{algorithm}[tb]
\caption{\textbf{D}ominant \textbf{R}eachable and \textbf{O}bservable subspace-based \textbf{P}rojection (\drop~) to Construct ROMs~\cite{benner2019identification} }\label{alg:drop}
 \hspace*{\algorithmicindent} \textbf{Input:} $\cK(s),~\bB$, and $\bC$, defining  transfer function as in \eqref{eq:structured_sys}, and \\ 
  \hspace*{\algorithmicindent}~~~~\qquad~ interpolation points $\{\sigma_1,\ldots,\sigma_\cN\}$ with $\cK(s)$ taking the affine form \eqref{eq:K_affine}.\\
    \hspace*{\algorithmicindent} \textbf{Output:} {The reduced-order matrices: $\mathcal{\hat{K}}\left(s\right)$, $\hat{\bB}$, $\hat{\bC}$.}
\begin{algorithmic}[1]
\State Compute $\bV$ and $\bW$ as defined in \eqref{eq:intpol} and \eqref{eq:intpol2} using the interpolation points $\sigma_i$. 
\State  Determine SVDs of the following matrices:
\begin{equation}\label{eq:drop_svd}
    \left[\bW^{\top}\bA_1\bV,\dots,\bW^{\top}\bA_l \bV\right] = \bW_1 \mathbf{\Sigma}_1\tilde{\bV}_1^{\top}\quad \text{and} \quad
    \begin{bmatrix}
        \bW^{\top}\bA_1\bV\\ \vdots\\ \bW^{\top}\bA_l \bV
    \end{bmatrix} = \tilde{\bW}_2\mathbf{\Sigma}_2\bV^{\top}_2
\end{equation}
\State Determine the order of the \ac{ROM} $r := \min(r_1,r_2)$, where $r_1$ and $r_2$ are so that the following conditions are satisfied:
\begin{equation*}
    \dfrac{\sum_{j=1}^{r_1}\lambda_1^{(j)}}{\sum_{j=1}^{\cN}\lambda_1^{(j)}} < \texttt{tol}\qquad \text{and}~~\dfrac{\sum_{j=1}^{r_2}\lambda_2^{(j)}}{\sum_{j=1}^{\cN}\lambda_2^{(j)}} < \texttt{tol},
\end{equation*}
with $\lambda_1^{(j)}$ and $\lambda_2^{(j)}$ being the $j$th diagonal entry of $\mathbf{\Sigma}_1$ and $\mathbf{\Sigma}_2$, respectively. 
\State Compute projection matrices: $\bV_{\text{p}} = \bV\bV_2\left(:,\;1{:}r\right)$ and $\bW_\text{p} = \bW\bW_1\left(:,\;1{:}r\right)$
\State Compute reduced matrices via Petrov-Galerkin projection:
\begin{align*}
    \hat{\bC} = \bC\bV_\text{p}, \quad \hat{\mathcal{K}}\left(s\right) = \bW_\text{p}^{\top}\mathcal{K}\left(s\right)\bV_\text{p}, \quad \hat{\bB} = \bW_\text{p}^{\top} \bB 
\end{align*}
\end{algorithmic}
\end{algorithm}

\subsection{Generalized Sylvester equations and computation of subspaces}\label{subsec:Sylv_Eq}
As noted in the previous subsection, the matrices $\bV$ and $\bW$, given in \eqref{eq:intpol} and \eqref{eq:intpol2}, need to be computed, which then allow us to encode dominant subspaces used for \ac{MOR}.  Each column of $\bV$ and $\bW$ requires solving a large-scale linear system for each chosen interpolation point. Next, we write the matrices $\bV$ and $\bW$ as solutions to generalized Sylvester equations. This is presented in the following theorem, which can be viewed as a generalization of the results from \cite{morGalVV04a} to the class of \LSS.

\begin{theorem} Let $\bV$ and $\bW$ be defined as in \eqref{eq:intpol} and \eqref{eq:intpol2}, respectively. Then $\bV$ and $\bW$ are also the solutions to the following generalized Sylvester equations:
    \begin{subequations}\label{eq:SLE}
    \begin{align}
        \bA_1 \bV \bF_1^{\bLambda} + \cdots + \bA_l \bV \bF_l^{\bLambda} &= \bB \mathbf{1}^{\top},\label{eq:SLE_v}\\
        \bA_1^\top \bW \bF_1^{\bLambda} + \cdots + \bA_l^\top \bW \bF_l^{\bLambda} &= \bC^\top \mathbf{1}^{\top},\label{eq:SLE_w}
    \end{align}
    \end{subequations}
    where  $\bLambda = \diag{\sigma_1,\dots, \sigma_\cN}$, $\bF_i^{\bLambda} = \diag{f_i(\sigma_1),\ldots, f_i(\sigma_\cN)}$ with $f_i(s)$ as in \eqref{eq:K_affine_red}, and $\mathbf{1}=\left(1,\dots, 1\right)^{\top} \in \mathbb{R}^{\cN \times 1}$. 
\end{theorem}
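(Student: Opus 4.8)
The plan is to verify that the matrices $\bV$ and $\bW$ defined columnwise in \eqref{eq:intpol} and \eqref{eq:intpol2} satisfy the generalized Sylvester equations \eqref{eq:SLE} by checking the identity column by column. First I would observe that, by construction, the $j$th column of $\bV$ is $\bv_j := \mathcal{K}(\sigma_j)^{-1}\bB$, so that $\mathcal{K}(\sigma_j)\bv_j = \bB$. Expanding $\mathcal{K}(\sigma_j)$ via the affine form \eqref{eq:K_affine} gives $f_1(\sigma_j)\bA_1 \bv_j + \cdots + f_l(\sigma_j)\bA_l \bv_j = \bB$ for each $j \in \{1,\dots,\cN\}$.

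Next I would assemble these $\cN$ vector identities into a single matrix identity. The key bookkeeping point is that post-multiplying $\bA_i \bV$ by the diagonal matrix $\bF_i^{\bLambda} = \diag{f_i(\sigma_1),\dots,f_i(\sigma_\cN)}$ scales the $j$th column of $\bA_i\bV$ by $f_i(\sigma_j)$; hence the $j$th column of $\sum_{i=1}^l \bA_i \bV \bF_i^{\bLambda}$ equals $\sum_{i=1}^l f_i(\sigma_j)\bA_i \bv_j$, which by the previous paragraph is exactly $\bB$. Since this holds for every $j$, the matrix $\sum_{i=1}^l \bA_i \bV \bF_i^{\bLambda}$ has all $\cN$ columns equal to $\bB$, i.e. it equals $\bB \mathbf{1}^\top$ with $\mathbf{1}=(1,\dots,1)^\top \in \R^{\cN\times 1}$. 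This establishes \eqref{eq:SLE_v}. The argument for \eqref{eq:SLE_w} is entirely analogous: the $j$th column of $\bW$ is $\bw_j := \mathcal{K}(\sigma_j)^{-\top}\bC^\top$, so $\mathcal{K}(\sigma_j)^\top \bw_j = \bC^\top$; transposing the affine form gives $\mathcal{K}(s)^\top = f_1(s)\bA_1^\top + \cdots + f_l(s)\bA_l^\top$ (the $f_i$ are scalar-valued, so they are unaffected), and the same columnwise scaling argument with $\bF_i^{\bLambda}$ yields $\sum_{i=1}^l \bA_i^\top \bW \bF_i^{\bLambda} = \bC^\top \mathbf{1}^\top$.

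This proof is essentially a direct computation, so there is no serious obstacle; the only place that warrants care is the conversion between the columnwise linear-system relations and the compact matrix form, specifically making explicit that right-multiplication by $\bF_i^{\bLambda}$ implements the per-column scaling by $f_i(\sigma_j)$ and that the functions $f_i$ used here coincide with those appearing in \eqref{eq:K_affine} (the cross-reference in the statement to \eqref{eq:K_affine_red} should be read as referring to the same scalar functions). One might also briefly remark that the hypothesis that $\mathcal{K}(\sigma_j)$ is invertible at each interpolation point is what makes $\bV$ and $\bW$ well defined in the first place, but no invertibility of the coefficient matrices $\bA_i$ or uniqueness of the Sylvester solution is needed for the stated claim.
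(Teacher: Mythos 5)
Your proposal is correct and follows essentially the same argument as the paper: the paper also verifies the identity columnwise (phrased via unit vectors $\mathbf{e}_k$ and the relation $\bF_i^{\bLambda}\mathbf{e}_k = f_i(\sigma_k)\mathbf{e}_k$), multiplying $\mathcal{K}(\sigma_k)\bv_k = \bB$ out through the affine form and assembling the $\cN$ column identities into the matrix equation, with the $\bW$ case handled analogously. Your remarks about the invertibility hypothesis and the cross-reference to the scalar functions $f_i$ are accurate but not needed beyond what the paper already assumes.
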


\begin{proof}
    First, note that for the $k$-th unit vector $\mathbf{e}_k$ we have 
    \begin{align}\label{eq:f_e_rel}
     \bF_i^{\bLambda}\mathbf e_k = f_i(\sigma_k) \mathbf e_k, \qquad i \in \{1,\ldots, l\}.
     \end{align}
Next, the $k$th column of $\bV$, denoted $\bv_k$, is given by
\begin{equation}
    \label{eq:3.3}
       \bV\mathbf e_k = \bv_k = \cK(\sigma_k)^{-1}\bB = \left(f_1(\sigma_k)\bA_1+\cdots+ f_l(\sigma_k)\bA_l\right)^{-1}\bB,
    \end{equation}
    which directly follows from \eqref{eq:intpol}. We multiply \eqref{eq:3.3} by $\left(f_1(\sigma_k)\bA_1+\cdots+ f_l(\sigma_k)\bA_l\right)$ from the left-hand side to obtain
    \begin{align}
       \left(f_1(\sigma_k)\bA_1+\cdots+ f_l(\sigma_k)\bA_l\right)\bV\mathbf e_k = \bB.
    \end{align}
    Rearranging the above equation and utilizing \eqref{eq:f_e_rel}, this results in
    \begin{align}
     \bB &=   f_1(\sigma_k)\bA_1\bV\mathbf e_k+\cdots+ f_l(\sigma_k)\bA_l\bV\mathbf e_k \nonumber\\
        &= \bA_1\bV \left(f_1(\sigma_k)\mathbf e_k\right)+\cdots+ \bA_l\bV \left(f_l(\sigma_k)\mathbf e_k\right) \nonumber\\
        &= \bA_1\bV\mathbf \bF_1^{\bLambda}\mathbf e_k+\cdots+ \bA_l\bV\mathbf \bF_l^{\bLambda}\mathbf e_k\label{eq:relation_v_e_b}
    \end{align}
    Since $\bB = \bB\mathbf{1}^{\top} \mathbf e_k$, we have the following relation:
    \begin{equation}
    \bA_1\bV\mathbf \bF_1^{\bLambda}\mathbf e_k+\cdots+ \bA_l\bV\mathbf \bF_l^{\bLambda}\mathbf e_k = \bB\mathbf{1}^{\top} \mathbf e_k,
    \end{equation}
    which holds for every $k \in \{1,\ldots, \cN\}$. This then yields
    \begin{equation}
    \bA_1\bV\mathbf \bF_1^{\bLambda}+\cdots+ \bA_l\bV\mathbf \bF_l^{\bLambda} = \bB\mathbf{1}^{\top}.
    \end{equation}
    Analogously, we can prove that $\bW$ also satisfies \eqref{eq:SLE_w}. This concludes the proof.
\end{proof}

The generalized Sylvester equations for $\bV$ and $\bW$ are used extensively to determine reachable and observable subspaces. In particular, they will provide a guidance tool to determine active samples for interpolation points that are likely to contain the most relevant information about subspaces. 

\section{Active  Sampling  of Interpolation Points}\label{sec:active_sampling}
As discussed in the previous section, if the number of interpolation points $\cN$ is large enough, the matrices $\bV$ and $\bW$ in \eqref{eq:intpol_main} encode the reachability and observability subspaces allowing the \drop~ procedure to construct accurate ROMs. However, if the set of interpolation points is too large, the construction of $\bV$ and $\bW$ may become computationally costly due to the large number of large-scale linear system solves it requires. In this section, we present the main contribution of this paper, namely, an active sampling procedure to learn  subsamples from a dense training of interpolation points. These selected interpolation points are then used to construct low-rank solutions of $\bV$ and $\bW$ via the Sylvester equations \eqref{eq:SLE_v} and \eqref{eq:SLE_w}.  As a consequence, we avoid the computation of linear solves for every point in the training set and, hence expect to significantly speed up the \drop~ procedure, which is outlined in \Cref{alg:drop}. 

More precisely, our goal is to construct the matrices $\bV$ and $\bW$ in \eqref{eq:intpol} and \eqref{eq:intpol2} without solving the corresponding linear equations at all interpolation points in the training set. 
We focus our discussion on the computation of $\bV$; however, the same arguments apply for computing the matrix $\bW$. As presented in \Cref{subsec:Sylv_Eq}, the matrix $\bV$ is the solution of the Sylvester equation \eqref{eq:SLE_v}. We assume that there exists a lower-dimensional model that can approximate the dynamics of the high-fidelity model. As a result, one can expect the reachability subspace to be approximable by a subspace of dimension much less than $n$. 
Thus, we seek to approximate the matrix $\mathbf{V} \in \R^{n\times \cN}$ in a low-rank form, i.e.
\begin{equation}\label{eq:V_aprrox_VZ}
    \bV \approx {\bSv}\bZ,
\end{equation}
where ${\bSv}\in \mathbb{R}^{n\times n_p}$ and $\bZ \in \mathbb{R}^{n_p\times \cN}$ with $n_p \ll \{n,\cN\}$. This also means that the solution of the Sylvester equation \eqref{eq:SLE_v} exhibits a low-rank structure.

To this end, we want to determine the matrix $\bSv$ using only a few samples from the training set of interpolation points, specifically by using only $n_p$ points from $\cN_d$, which is the size of the training set.
These interpolation points are chosen by an active search until the solution of the corresponding Sylvester equation in a low-rank form is  accurate up-to the desired tolerance. 
To measure the accuracy of the low-rank solution, we leverage the residual of the Sylvester equation as an error estimator. Employing the concept of greedy sampling, we select the point from the training set yielding the maximum residual as next sampling point. The selected interpolation point is expected to provide us with more relevant information about reachability than others.  

In the following, we mathematically formalize our active sampling scheme. To that end, let us denote a training set of interpolation points by $\{\sigma_1, \dots, \sigma_\cN\}$. 
We begin by constructing the matrix $\bSv_1$ by selecting an interpolation point $\sigma_1$ as follows:
\[\bSv_1 =  \text{orth}\left(\begin{bmatrix} \mathcal{K}(\sigma_{1})^{-1}\bB & \mathcal{K}(\overline{\sigma}_{1})^{-1}\bB\end{bmatrix}\right).  \]
Note that, due to \Cref{rem:close_conj}, we include both $\sigma_1$ and $\overline{\sigma}_1$ in order to enforce that the reduced system has a real realization.
The matrix $\bSv_1$ can then be used to approximate the matrix $\bV$ in the first iteration, i.e. $\bV_1 = \bSv_1\bZ_1$, where $\bZ_1$ is a matrix of appropriate size, and $\bV_1$ is an approximation of $\bV$ at the first step.  In general, we denote an approximation of $\bV$ at the $k$th step by $\bV_k$, which can be given by $\bV_k = \bSv_k\bZ_k$ with $\bSv_k$ and $\bZ_k$ being matrices of compatible size. 

Next, we assume that we are in the $k$th iteration and have the matrix $\bSv_k$, which is orthonormal, i.e. $\left(\bSv_k\right)^{\top}\bSv_k = \bI$.  As a first step, we need to determine $\bZ_k$ so that we can examine the quality of $\bV_k=\bSv_k\bZ_k$. 
For this, we leverage \eqref{eq:SLE_v}, where $\bV$ is replaced by its approximation $\bV_k = \bSv_k\bZ_k$, to get the following:
    \begin{equation}\label{eq:syl_approx}
    {\bA_1 \bSv_k\bZ_k \bF_1^{\bLambda} + \dots + \bA_l \left(\bSv_k\right)\bZ_k \bF_l^{\bLambda}} \approx \bB \mathbf{1}^{\top}  .
    \end{equation}
Projecting \eqref{eq:syl_approx} onto a low dimensional subspace by multiplying with $\left(\bSv_k\right)^{\top}$ from the left-hand side yields
\begin{align} \label{eq:proj}
    \left(\bSv_k\right)^{\top}\bA_1 \left(\bSv_k\right)\bZ_k \bF_1^{\bLambda} + \dots + \left(\bSv_k\right)^{\top}\bA_l \left(\bSv_k\right)\bZ_k \bF_l^{\bLambda} \approx \left(\bSv_k\right)^{\top}\bB \mathbf{1}^{\top}. 
    \end{align}
As a result, we can determine $\bZ_k$ by solving the following minimization problem:
\begin{align} \label{eq:sol_zk}
   \bZ_k = \arg\min_{\bZ_k}\left\| \left(\bSv_k\right)^{\top}\bA_1 \bSv_k\bZ_k \bF_1^{\bLambda} + \dots + \left(\bSv_k\right)^{\top}\bA_l \bSv_k\bZ_k \bF_l^{\bLambda} - \left(\bSv_k\right)^{\top}\bB \mathbf{1}^{\top}\right\|_F. 
    \end{align} 
Using $\bSv_k$ and $\bZ_k$, we can approximate $\bV \approx \bV_k = \bSv_k\bZ_k$, allowing us to determine the residual $\bR_k$ to estimate the quality of the approximation for $\bV$:
\begin{align}  \label{eq:residual}
    \bR_k = {\bA_1 \left(\bSv_k\right)\bZ_k \bF_1^{\bLambda} + \dots +\bA_l \left(\bSv_k\right)\bZ_k \bF_l^{\bLambda}} - \bB \mathbf{1}^{\top}. 
\end{align}
Since each row $j$ of $\bR_k$, denoted $\bR_k^j$, corresponds to the residual associated with the point $\sigma_j$ in the training set, we compute the residual norm column-wise as follows:
  \begin{align}
    \label{eq:norm}
        r_j^k = \left\| \bR_{k}^{j}\right\|_2, \quad \text{for $j = \{1, \dots, \cN\}$}. 
    \end{align}
Then, we introduce the following mean residual error, which is used to determine how well $\bV$ has been approximated by the current low-rank factorization:
    \begin{align}
        \label{eq:res-err}
        \epsilon_k= \dfrac{1}{\cN}\sum_{j=1}^{\cN}{r_j^k}.
    \end{align}
\newline
If the residual error $\epsilon_k$ is  below a certain tolerance \texttt{tol}, the procedure is stopped, and we have $\bV$ in a low-rank form of desired quality. Otherwise, a new interpolation point needs to  be determined to enrich the subspace of $\bSv_k$.  In the simplest way possible, a new point $\sigma_{\texttt{sel}}$ is chosen by 
    \begin{align}
    \label{eq: sigma}
        \sigma_{\texttt{sel}} = \sigma_{\texttt{idx}}, \quad \text{where} \quad \texttt{idx}= \argmax_{j=1,~\dots,~\cN}{~r_j^k}.
    \end{align}

In other words, we select the new interpolation point that corresponds to the largest error estimator (or residual). 
Once the adaptive procedure has stopped, we have $\bV \approx \bSv_k\bZ_k$ as an approximate solution to the Sylvester equation \eqref{eq:SLE_v}. As a consequence, $\bSv_k\bZ_k$ encodes the reachability subspace of the associated linear structured system and can be used in the \drop~ procedure in \Cref{alg:drop}. We sketch the pseudocode of the proposed active sampling procedure in \Cref{alg:greedy}. We remind that one can perform the same approach to obtain an approximation for $\bW$. Moreover, one can reuse the previously selected interpolation points while determining a low-rank factor for $\bV$ and use them for the initial construction of $\bW$ so that we have a good estimate of the observability  subspace, or an approximate solution of \eqref{eq:SLE_w} in  low-rank form, to begin with. 

Despite the above efforts, evaluating the residual $\bR_k$ can become computationally expensive.
As a remedy, we shall discuss how to take advantage of the incremental structure of $\hat{\bV}$ and avoid repeating calculations in \Cref{subsec:residual}. Moreover, we shall discuss an approach to choose multiple interpolation points in \Cref{subsec:mulpoint} so that the computation of the residual needs to be performed less often.
\begin{algorithm}[!tb]
\caption{Active sampling of interpolation points from a training set and solving Sylvester equation \eqref{eq:SLE} in low-rank form.}\label{alg:greedy}
 \hspace*{\algorithmicindent} \textbf{Input:} {Coefficient matrices of the Sylvester equation \eqref{eq:SLE_v}, a set of training points $\{\sigma_1,..., \sigma_\cN\}$, and \texttt{tol}}.\\
    \hspace*{\algorithmicindent} \textbf{Output:} An approximated solution of the Sylvester equation in low-rank form, i.e. {$\bV \approx \bSv_k\bZ_k$}.
\begin{algorithmic}[1]
\State {Set $\bSv_{1} = \text{orth}\left(\begin{bmatrix}\mathcal{K}(\sigma_{1})^{-1}\bB,~~\mathcal{K}(\bar{\sigma}_{1})^{-1}\bB\end{bmatrix}\right)$.}
\State {Solve the projected minimization problem \eqref{eq:sol_zk} for $\bZ_1$.}
\State {Compute the residual $\bR_1$ as in \eqref{eq:residual}}, and the residual error $\epsilon_1$ as in \eqref{eq:res-err}.
\State {Set $k = 1$.}
\While {error $\epsilon_k>$ \texttt{tol}}
    \State {Select new interpolation point $\sigma_{\texttt{sel}}$ as in \eqref{eq: sigma}}.
  \State {Update  $\bSv_{k+1}$ as follows:
  \[\bSv_{k+1} = 
        \text{orth} \left(\begin{bmatrix}\bSv_k,~ \mathcal{K}(\sigma_{\texttt{sel}})^{-1}\bB,~ \mathcal{K}(\overline{\sigma}_{\texttt{sel}})^{-1}\bB\end{bmatrix}\right).\]}
\State {Solve the projected minimization problem for $\bZ_{k+1}$ in \eqref{eq:proj}}.
\State {Compute the residual $\bR_{k+1}$ as in \eqref{eq:residual}}, and  the residual error $\epsilon_{k+1}$ as in \eqref{eq:res-err}.
\State Update $k \leftarrow k+1$.
\EndWhile
\end{algorithmic}
\end{algorithm}

\section{Computational Aspects}\label{sec:CompAspects}
In this section, we describe some crucial points allowing to speed up the proposed active sampling approach in \Cref{alg:greedy} and its combination with the \drop~ procedure in \Cref{alg:drop}.
\subsection{Solving the projected minimization problem}
\label{subsec:numex}
 An important part of \Cref{alg:greedy} consists of solving the minimization problem \eqref{eq:sol_zk} for $\bZ_k$ (see Steps $2$ and $8$ in \Cref{alg:greedy}).  To solve this equation, we take advantage of the special structure, especially the fact that $\mathbf F_i^\Lambda$ are diagonal matrices, and their diagonal entries are given by $ f_i(\sigma_i)$. 
 Then, under the assumption that 
 $${f_1(s)\left(\bSv_k\right)^\top\bA_1\bSv_k + \cdots + f_l(s)\left(\bSv_k\right)^\top\bA_l\bSv_k}$$
 is invertible for all $s \in \{\sigma_1,\ldots, \sigma_\cN\}$, it can be shown that the $j$th column of $\bZ_k$, denoted $\bz_k^j$, can be computed as follows:
     \begin{align*}
     \bz_{k}^j =  \left({f_1(\sigma_j)\hat\bA_1 + \cdots + f_l(\sigma_j)\hat\bA_l}\right)^{-1} \hat\bB,
    \end{align*}
where $\hat\bA_i = \left(\bSv_k\right)^\top\bA_i\bSv_k$ and $\hat\bB = \left(\bSv_k\right)^\top\bB$.

As a consequence, the solution of \eqref{eq:sol_zk} can be determined by solving projected linear systems. Note that these are low-dimensional linear systems and thus can be solved with low computational resources.

\subsection{Residual computation}\label{subsec:residual}

For the residual computation, we need to evaluate \eqref{eq:residual} at Steps 3 and 9 in \Cref{alg:greedy}. The most expensive part of this is the computation of $\bA_i\bSv_k\bZ_k$, $i\in \{1,\ldots, l\}$. However, we can speed this up by storing the computations from the previous step. For this, note that $\bSv_{k-1}$ is expanded actively to obtain $\bSv_k$ as follows:
\begin{align*}
    \bSv_{k}= \begin{bmatrix}
    \bSv_{k-1}, & \bs
    \end{bmatrix}.
\end{align*}
For simplicity, we assume that $\bs$ is a vector. Nevertheless, this can also be adapted to matrices. Hence, the error estimation at the $k$th iteration can be computed via
\begin{align*}
  \bR_{k} &= \bA_1 \bSv_{k}\bZ_{k}  \mathbf F_1^\Lambda + \cdots+ \bA_l \bSv_{k}\bZ_{k}  \mathbf F_l^\Lambda - \bB \mathbf{1}^\top\\
  &= \bA_1 \left[\bSv_{k-1}, \bs \right]\bZ_{k}  \mathbf F_1^\Lambda + \cdots+ \bA_l \left[\bSv_{k-1}, \bs \right]\bZ_{k}  \mathbf F_l^\Lambda - \bB \mathbf{1}^\top.
\end{align*}
Note that $\bA_i\bSv_{k-1}$ has been computed in the previous step, which can be pulled from memory. Hence, we only need to calculate $\bA_i\bs$ in order to determine $\bA_i\bSv_{k}$, which can then be stored for the next iteration. The rest of the computations are in low-rank form, which is cheap. 
\subsection{Selecting multiple interpolation points}
\label{subsec:mulpoint}

Despite performing efficient computation for the residual as discussed in the above subsection, evaluating it at each step after adding only the most active point can still be a bottleneck. Hence, to perform this step less frequently, one can select multiple interpolation points in each iteration. To this end,  we take advantage of filter functions to extract more information from the residual $\bR_k$ and the column-wise norms $r_j$ \eqref{eq:norm}. Such an idea has initially been proposed in \cite{morCheGB22}. With a filter function, one aims to exclude the domain of the points  that are already selected for the next iteration through the multiplication of the residual $r_j$ with a suitable filter. In \cite{morCheGB22}, the function 
 \begin{align*}
        \bg(s, \sigma_{\text{sel}})=1-e^{(-\beta(\log(|s|+\epsilon)-\log(|\sigma_{\text{sel}}|+\epsilon))^2)}
    \end{align*}
  is used as a filter function for a single selected interpolation point with the parameters $\beta = 0.6$ and $\epsilon = 10^{-15}$. We make use of the same function in this work. Furthermore, in our numerical experiments, we select three points in every iteration of the active sampling approach in \Cref{alg:greedy}. The first interpolation point  $\sigma_{\text{sel}}^{(1)}$ is selected as usual as in \eqref{eq: sigma}. Next, we apply the filter on the $r_j$, associated with the interpolation point $\sigma_j$ in the training set and replace them with 
    \begin{align*}
        \Tilde{r}_j=\bg(\sigma_j, \sigma_{\text{sel}}^{(1)})r_j, \quad \text{for $j = 1, \dots,  \cN$}.
    \end{align*}  
Then, the second interpolation point $\sigma_{\text{sel}}^{(2)}$ is selected as before but with respect to the updated residual $ \tilde{r}_j$. The procedure is repeated for the third interpolation point in the same way.

Note that we start \Cref{alg:greedy} with a single interpolation (the smallest in magnitude) from the training set to estimate the error at the first iteration. In the following iterations, we consider three points from our training set. Hence, at step $k$, we have $1+3(k-1)$ selected points from the training set.

\subsection{Accelerating the SVD in Algorithm 1 by using low-rank factorization}
In Step 2 of the \drop~ (\Cref{alg:drop}), it is necessary to evaluate the \ac{SVD} of two matrices. We concentrate our discussion on the matrix
\begin{equation}\label{eq:drop_SVD_speedup} 
    \cM =  \left[\bW^{\top}\bA_1 \bV,\dots,\bW^{\top}\bA_l \bV\right] \in \R^{\cN \times l\cN},  
\end{equation}
where $n$ is the order of the high-fidelity model,  $\cN$ is the number of interpolation points in the training set, and $l$ corresponds to the number of matrices defining the \LSS. Hence, if the training set is large, the SVD computation  of $\cM$ could be a major barrier.

In order to efficiently compute the SVD of $\cM$, we utilize the low-rank factorization of the matrices $\bV$ and $\bW$, which are determined using \Cref{alg:greedy}. Let us assume that 
\begin{equation}\label{eq:VW_lowrank}
    \bV \approx \bSv \bZ \quad \text{and}\quad\bW \approx \bSw \bY,
\end{equation}
where $\bSv\in \R^{n \times n_v}$, $\bZ\in \R^{n_v \times \cN}$, $\bSw\in \R^{n \times n_w}$, and $\bY\in \R^{n_w \times \cN}$ with ideally $\{n_v,n_w\} \ll \{n,\cN\}$. 
Thus, we get 
\begin{equation}\label{eq:drop_SVD_speedup_lowrank} 
    \cM \approx \tilde\cM =  \left[\bY^{\top} \left(\bSw\right)^\top\bA_1 \bSv\bZ,\dots,\bY^{\top} \left(\bSw\right)^\top\bA_l \bSv\bZ\right],
\end{equation}
Next, we compute the SVD of the matrices $\bZ$ and $\bY$, denoted by 
\begin{equation}\label{eq:svd_ZY}
\bZ = \bU_z\mathbf{\Sigma}_z\bV_z^{\top} \quad \text{and}\quad \bY = \bU_y\mathbf{\Sigma}_y\bV_y^{\top},
\end{equation}
where $\bU_z \in \mathbb{R}^{n_z \times n_z}$, $\bU_y \in \mathbb{R}^{n_y \times n_y}$, $\bV_z \in \mathbb{R}^{n_z \times \cN}$, are $\bV_y \in \mathbb{R}^{n_y \times \cN}$ are orthonormal matrices, $\mathbf{\Sigma}_z\in \R^{n_z\times n_z}$ and $\mathbf{\Sigma}_y\in \R^{n_y\times n_y}$ are diagonal matrices. The SVD computations in \eqref{eq:svd_ZY} are expected to be much cheaper as they are of much lower dimensions. Substituting these decompositions in  \eqref{eq:drop_SVD_speedup_lowrank}, we obtain
\begin{align}
     \tilde\cM &=  \left[\bV_y\mathbf{\Sigma}_y\bU_y^{\top}  \left(\bSw\right)^\top\bA_1 \bSv\bU_z\mathbf{\Sigma}_z\bV_z^{\top},\dots,\bV_y\mathbf{\Sigma}_y\bU_y^{\top} \left(\bSw\right)^\top\bA_l \bSv\bU_z\mathbf{\Sigma}_z\bV_z^{\top}\right],\\
     & =  \bV_y\underbrace{\left[\mathbf{\Sigma}_y\bU_y^{\top}  \left(\bSw\right)^\top\bA_1 \bSv\bU_z\mathbf{\Sigma}_z,\dots,\mathbf{\Sigma}_y\bU_y^{\top} \left(\bSw\right)^\top\bA_l \bSv\bU_z\mathbf{\Sigma}_z\right]}_{=:\hat\cM}\left(\bI_l \otimes \bV_z^\top\right),
\end{align}
where $\bI_l \in \R^{l\times l}$ is the identity matrix. Leveraging orthonormality of the matrices $\bV_y$ and $\left(\bI_l \otimes \bV_z^\top\right)$, we can compute the SVD of $\hat\cM \in \R^{n_w\times l\cdot n_v}$, which is much smaller in size as compared to the size of $\tilde\cM \in \R^{n\times l\cN}$. Next, we denote the SVD of $\hat\cM$ as follows:
\begin{equation}
    \hat\cM = \bU_m\mathbf{\Sigma}_m\bV_m^\top. 
\end{equation}
As a result, the projection matrix $\bW_{\text{p}}$ in Step 4 of \Cref{alg:drop} can be given as :
\begin{align}
\bW_{\text{p}} &= \bW \bV_y\bU_m(:,1{:}r)\\
& \approx \bSw\bY \bV_y\bU_m(:,1{:}r) &\text{(Substituting for $\bW$ from \eqref{eq:VW_lowrank})}\\
& \approx \bSw\bU_y\mathbf{\Sigma}_y\bV_y^{\top} \bV_y\bU_m(:,1{:}r) &\text{(Substituting for $\bY$ from \eqref{eq:svd_ZY})}\\
&\approx \bSw\bU_y\mathbf{\Sigma}_y\bU_m(:,1{:}r).
\end{align}
Similarly, for the other SVD in Step 2 of the \drop~ procedure (\Cref{alg:drop}), it can be shown that 
\begin{align}
\bV_{\text{p}} & \approx \bSv\bU_v\mathbf{\Sigma}_v\tilde\bV_m(:,1{:}r),
\end{align}
where 
\begin{equation}
     \tilde\bU_m\mathbf{\tilde\Sigma}_m\tilde\bV_m^\top = \begin{bmatrix}\mathbf{\Sigma}_y\bU_y^{\top}  \left(\bSw\right)^\top\bA_1 \bSv\bU_z\mathbf{\Sigma}_z\\\vdots\\ \mathbf{\Sigma}_y\bU_y^{\top} \left(\bSw\right)^\top\bA_l \bSv\bU_z\mathbf{\Sigma}_z\end{bmatrix}. 
\end{equation}
This accelerates the computation to identify the dominant subspace by using SVDs of appropriate matrices, which are all done in a low dimension. More precisely, the complexity of SVDs and computing dominant subspaces depends only on the number of selected interpolation points from the training set and becomes independent of the order of the high-fidelity model and the size of the training set.

\section{Numerical Experiments}\label{sec:Numerics}
In this section, we investigate the performance of the proposed approach with active sampling, denoted \gdrop~, to determine the dominant subspaces for model reduction by actively selecting interpolation points from the training set. We compare \gdrop~~with \drop~ as proposed in \cite{benner2019identification}, in which interpolation points need to be pre-defined and all the points in the training set are taken into account. We present the performance of both approaches on various standard benchmarks. For \gdrop~, we consider a training set by taking equidistant points in a logarithmic scale in a pre-defined frequency range. This range and the number of points are individually stated in each example. We have performed all the computations using  \matlab~R2022a  on an \intel\coreifive-7200U processor at $2.50\,$GHz with 8\,GB RAM. Furthermore, to quantify the performance of \gdrop~~and \drop~, we measure a quantity, which is a function of the transfer functions of the high-fidelity model (denoted by $\bH(s)$) and learned \ac{ROM} (denoted by $\hat\bH(s)$). This is given by
 \begin{equation}
 \label{eq:error}
       \mathbf{e}(s) = {\dfrac{\max_{i}\lambda_i\left(\hat\bH(s)-\bH(s)\right)}{\max_{s \in i\mathbb{R}}{\lambda_1(\bH(s))}}},
\end{equation}
where $\lambda_i$ denotes the $i$th singular values with $\lambda_i \geq \lambda_{i+1}$.

Furthermore, we take the CPU time into account to measure the computational advantage. The CPU time includes, for both \drop~ and \gdrop~, the construction of the matrices $\bV$ and $\bW$ (if two-sided interpolation is considered) to determine reachable and observable subspaces followed by employing \Cref{alg:drop} to identify the dominant reachable and observable subspaces. Finally, to determine the order of the \acp{ROM} (Step 3 in \Cref{alg:drop}), we set $\texttt{tol} = 10^{-8}$. For the construction of $\bV$ in \drop~ we need to solve large-scale linear systems of equations for all interpolation points as in \eqref{eq:intpol}. For \gdrop~~the active sampling procedure as in \Cref{alg:greedy} with $\texttt{tol} = 10^{-3}$ is executed to construct the matrices $\bV$ and $\bW$.
%
\subsection{An LTI model}\label{sec:LTI}
We begin by considering an artificial \emph{full-order} LTI model, which has been widely used as a standard benchmark model \cite{slicot_fom, morPen06}. The model is of order $n = 1~006$ and has the following structure:
\begin{equation}\label{eq:LTI_model}
\begin{aligned}
    \dot{\bx}(t) &= \bA\bx(t) + \bB \bu(t),\\
    \by(t) &= \bC\bx(t),
\end{aligned}
\end{equation}
where $
    \bA \in \mathbb{R}^{n \times n}$, $\bB \in \mathbb{R}^{n \times 1}$,  $\bC \in \mathbb{R}^{1 \times n}$,
and its transfer function can be given by
\begin{align*}
    \bH(s) = \bC(s\bI-\bA)^{-1}\bB,
\end{align*}
with $\bI \in \mathbb{R}^{n \times n}$ being the identity matrix, $s = \jmath\omega$ and $\omega\in \R$. For this example, we assume the interesting frequency range to be $\omega \in [10^{-1}, 10^3]$.\\

\paragraph{Results:} Toward obtaining \acp{ROM}, we first consider $\cN = n= 1~006$ points in our training set in the defined frequency range. 
To employ \drop~, we need to consider all the points in our training set and compute the matrices in \eqref{eq:intpol_main} encoding the reachable and observable subspaces. \gdrop~, on the other hand, selects interpolation points actively based on the residual measure discussed in \Cref{sec:active_sampling}. As a result, we only select $13$ important points out of $1~006$ from our training set, followed by determining dominant subspaces using step 2 in \Cref{alg:drop}. The singular values, which are plotted in \Cref{fig:LTI_dec}, indicate an appropriate order of the \acp{ROM}. 
We notice the singular values from \gdrop~ and \drop~ are the same in the eyeball norm, which is expected since \gdrop~ encodes approximately the same subspace information but with the advantage of requiring fewer computational resources. Next, we determine \acp{ROM} of order $r = 20$  using both methods. We display the transfer functions of the \acp{ROM} and their comparison with the high-fidelity model in \Cref{fig:LTI_TF}. The plots show that the \acp{ROM} faithfully capture the behavior of the high-fidelity model. Moreover, the plot also shows the chosen interpolation points using \gdrop~, which nicely illustrates that \gdrop~ focuses on selecting interpolation points where important dynamics reside. For instance, \gdrop~ automatically selects interpolation points at the peaks, which are essential to capture the dynamics. 

To further illustrate the computational advantage of \gdrop~ over \drop~, we compare the computational costs of both approaches. For this, we vary the number of interpolation points in our training set and  consider different numbers $\left\{n, \frac{n}{2}, \floor*{\frac{n}{5}}\right\}$. We bar-plot the computational costs by noting the CPU time in \Cref{fig:LTI_time}. The figure shows that the cost for \drop~ strongly depends on the number of points in the training set. This is due to the involvement of the large-scale linear systems solves---as many as the size of the training set---and SVDs, whose size depends on how many linear solves are needed. The cost of \gdrop~ does not increase as rapidly as for \drop~ when the size of the training set is increased. For the largest training set, \gdrop~~is more than $12$x faster compared to \drop~. We further highlight that when only few points are selected, this also reduces the SVD computation costs to determine jointly reachable and observable dominant subspaces. This also contributes to the reduction in the CPU time for \gdrop~. However, we note that there is an additional cost for \gdrop~, which is associated with the computation of the residual \eqref{eq:residual}. When the training set is small or/and the order of the \ac{ROM} is large, the latter cost may dominate, thus could make \gdrop~~perform slightly worse. 

\begin{figure}[tb]
\begin{subfigure}[t]{0.6\textwidth}
    \includegraphics[width = 0.965\textwidth]{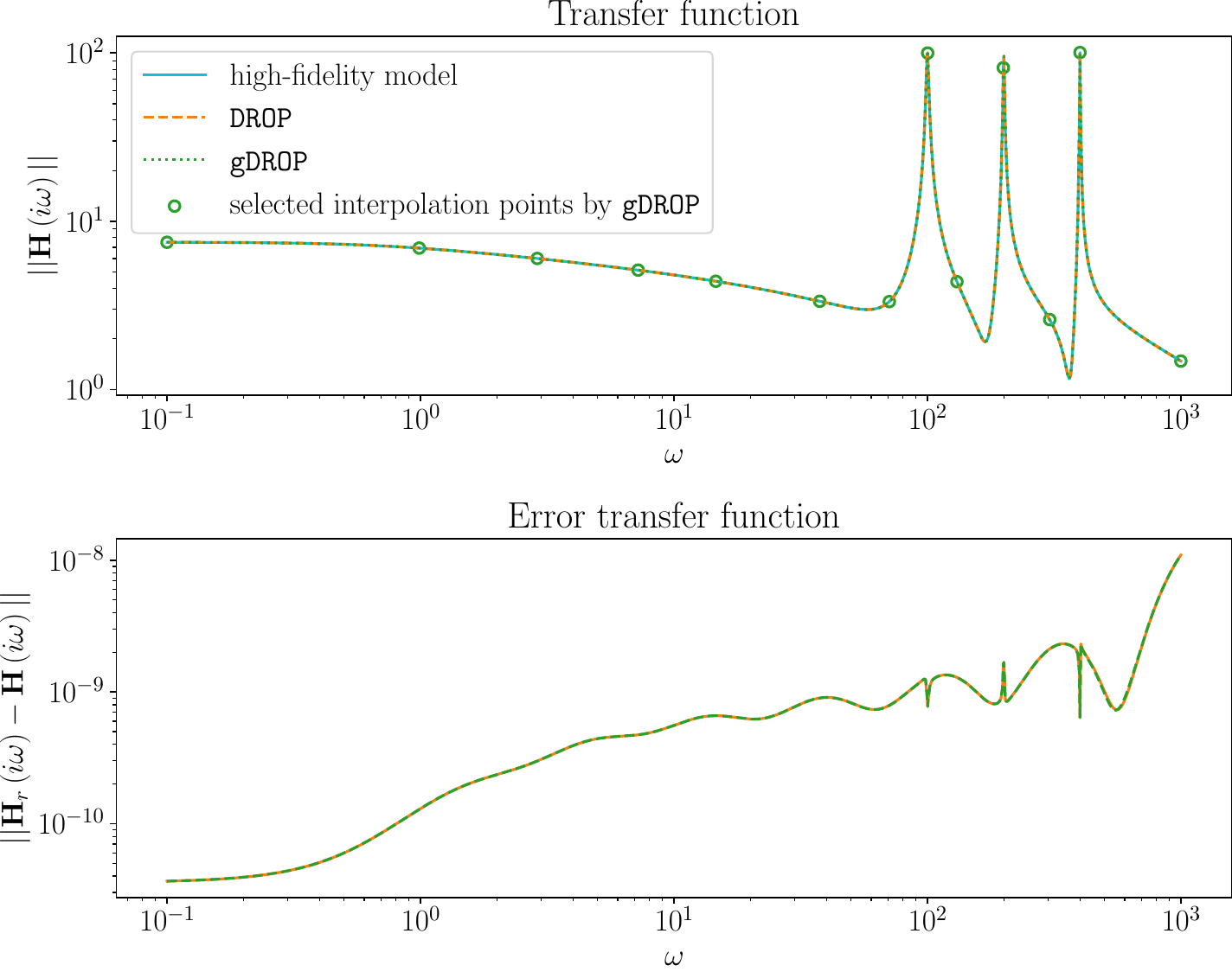}
    \caption{A comparison of the transfer functions for the high-fidelity and \acp{ROM} of order $r=20$.}
    \label{fig:LTI_TF}
\end{subfigure}
\begin{subfigure}[t]{0.395\textwidth}
    \includegraphics[width = 0.945\textwidth]{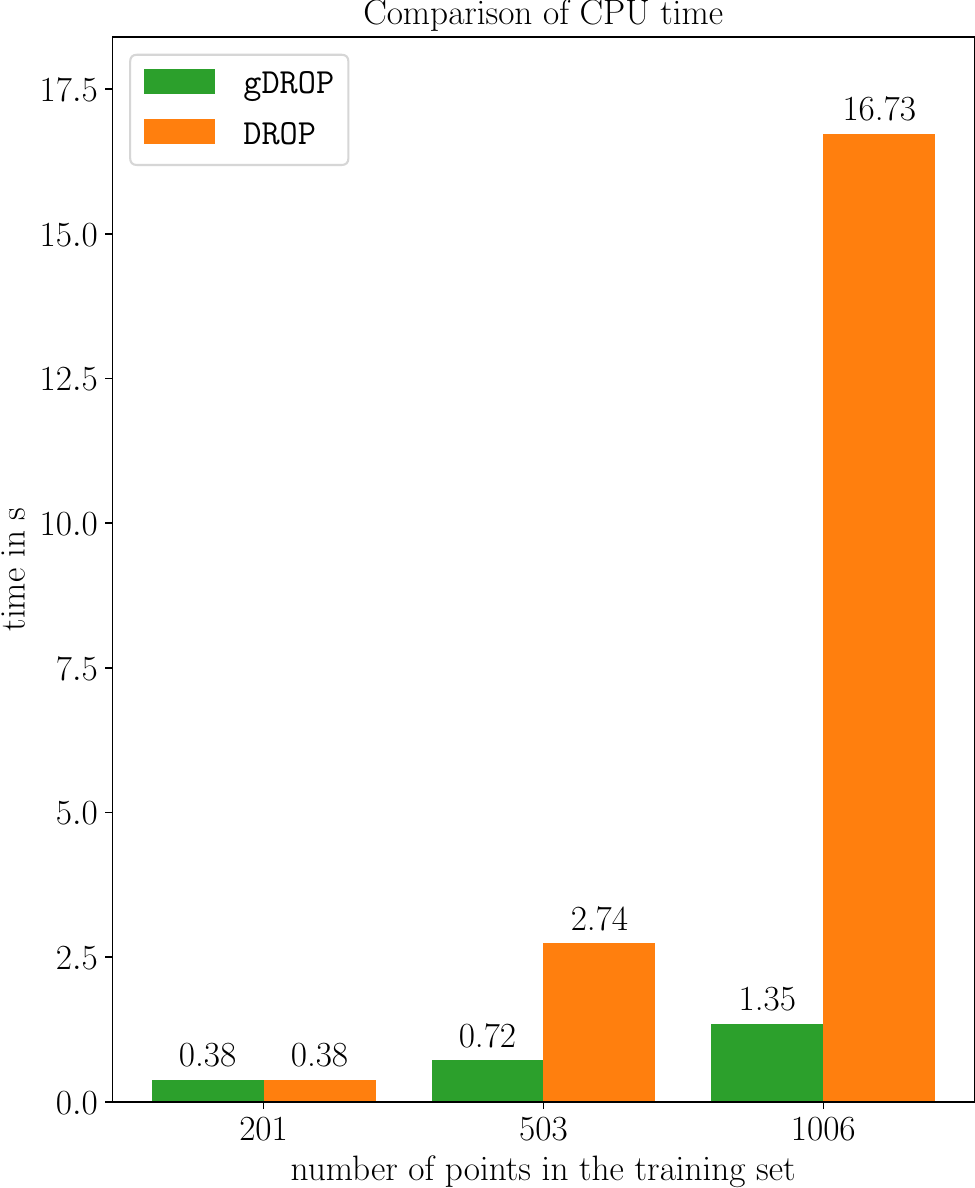}
    \caption{A comparison of computational costs (CPU time) to construct the \acp{ROM}.}
    \label{fig:LTI_time}
\end{subfigure}
\caption{LTI system: A comparison of the performance of \gdrop~~and \drop~.}
\end{figure}

\begin{figure}[tb]
   \centering
     \includegraphics[width = 0.5\textwidth]{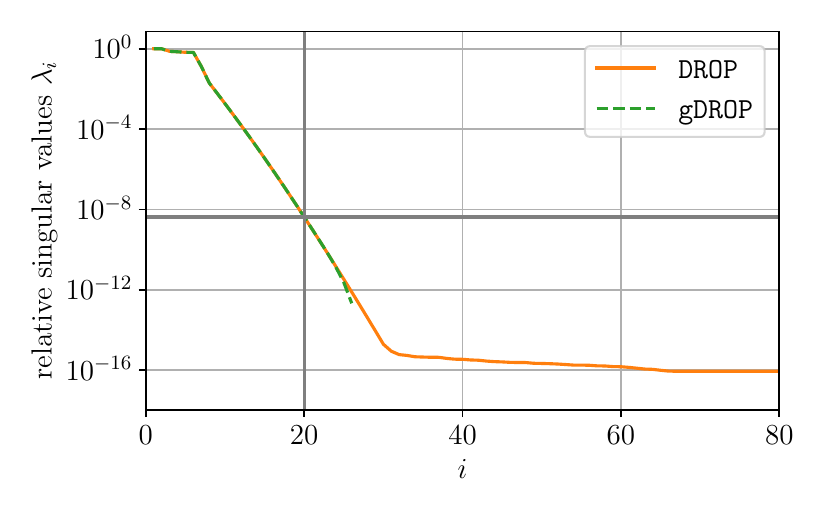}
     \caption{LTI system: relative decay of the singular values for \drop~ and \gdrop~.}
     \label{fig:LTI_dec}
\end{figure}

\subsection{Heating rod model with time-delay}
In the second example, we discuss a SISO time-delay system, initially discussed in \cite{michiels2011krylov}.  This model represents the behavior of a heated rod with distributed control and homogeneous Dirichlet boundary conditions, which is cooled by delayed feedback. The dynamics of the system are governed by a delay differential equation of the following form:
\begin{equation}\label{eq:delay_system}
\begin{aligned}
     \dot{\*x}(t) &= \bA\bx(t) + \*A_\tau\bx(t-\tau) + \*B\*u(t),\\
    \by(t) &= \*C\*x(t),
\end{aligned}
\end{equation}
where $\*A \in \mathbb{R}^{n \times n}$, $\*A_\tau \in \mathbb{R}^{n \times n}$, $\*B \in \mathbb{R}^{n \times 1}$ and $\*C \in \mathbb{R}^{1 \times n}$, $n = 1200$ is the order, and the time constant $\tau = 3$. The transfer function of the system \eqref{eq:delay_system} is given by
\begin{align*}
    \bH(s) = \*C(s\*I-\*A-\*A_\tau e^{-\tau s})^{-1}\*B.
\end{align*}
For this example, we consider the frequency range $\omega \in [10^{-3}, 10^3]$ (recall $s = \jmath \omega)$.
\paragraph{Results:} As in the previous example, we take $1~200$ points for our training set of interpolation points. We then employ \gdrop~ and \drop~ to  first obtain the relevant subspace information. To begin with the reachable subspace, \drop~~considers all $1~200$ points, while \gdrop~~chooses the points actively, thus requiring only seven interpolation points ($14$ if conjugate points are accounted) from the training set. The same holds for the observable subspace. As a result, the computational cost is significantly lower for \gdrop~ compared to \drop~. Based on the singular values (Step 3 in \Cref{alg:drop}), we determine dominant reachable and observable subspaces jointly. We consider $r=8$ to construct \acp{ROM} using a Petrov-Galerkin projection. Next, we compare the \acp{ROM} in \Cref{fig:delay_TF}, where we observe a similar accuracy, which is to be expected since they both have similar subspace information. 

Once again, the advantage of \gdrop~ lies in the computational speed-up. Next, we note the required computational efforts to construct \acp{ROM} using \gdrop~~and \drop~. For this purpose, we consider different scenarios by varying the number of points in the training set; we take  $\left\{n, \frac{n}{2}, \floor*{\frac{n}{5}}\right\}$, where $n$ is the order of the system. We plot the required CPU time for both approaches in \Cref{fig:delay_time}. We make a similar observation as in the previous example; the CPU time for \gdrop~~is significantly lower compared to \drop~ as the number of points in the training set increases. When the number of points is $n$, \gdrop~~is approximately $17$ times faster than \drop~.   

\begin{figure}[tb]
\begin{subfigure}[t]{0.6\textwidth}
    \includegraphics[width = 0.965\textwidth]{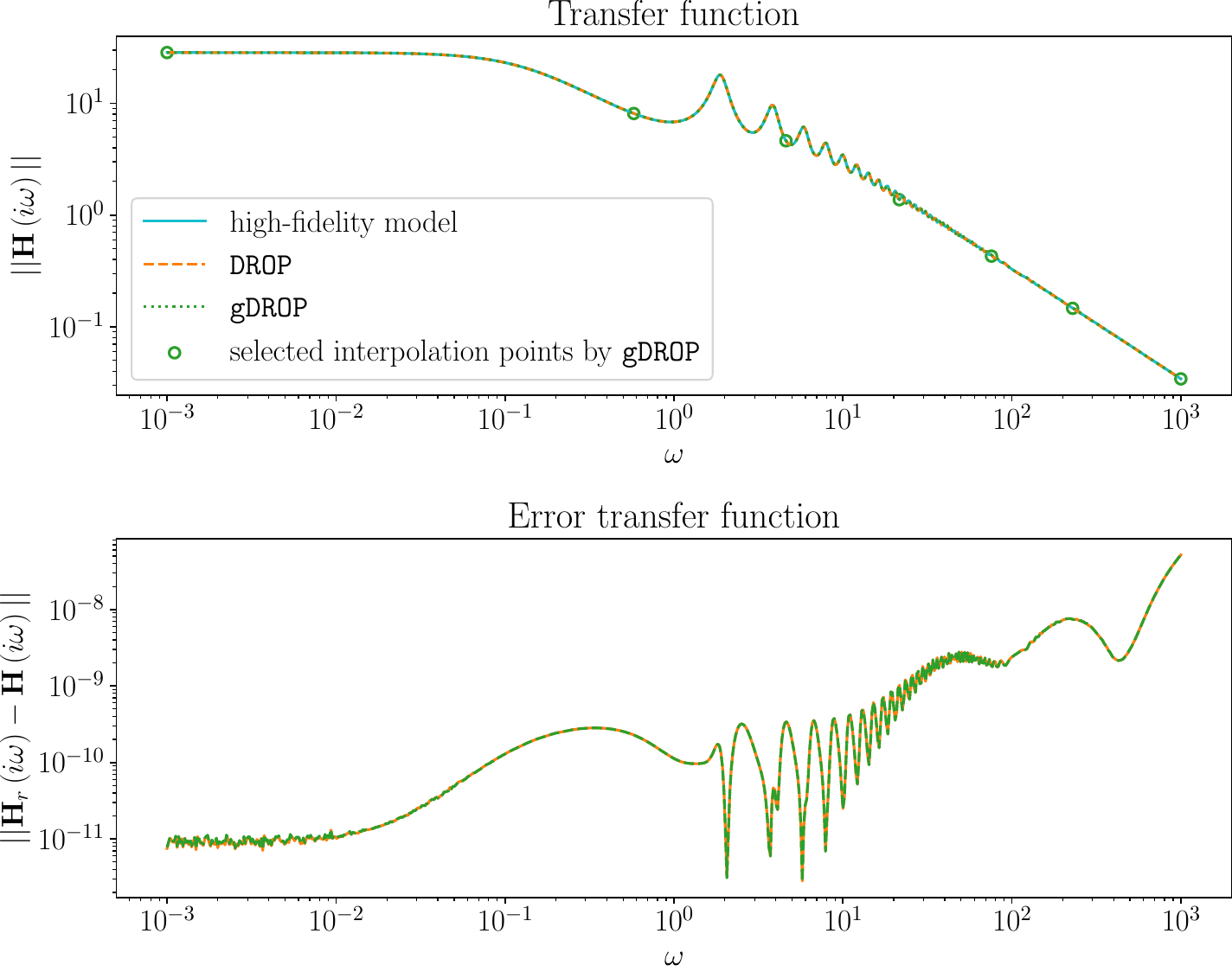}
    \caption{A comparison of the transfer functions of the high-fidelity and \acp{ROM} of order $r=8$.}
    \label{fig:delay_TF}
\end{subfigure}
\begin{subfigure}[t]{0.395\textwidth}
    \includegraphics[width = 0.945\textwidth]{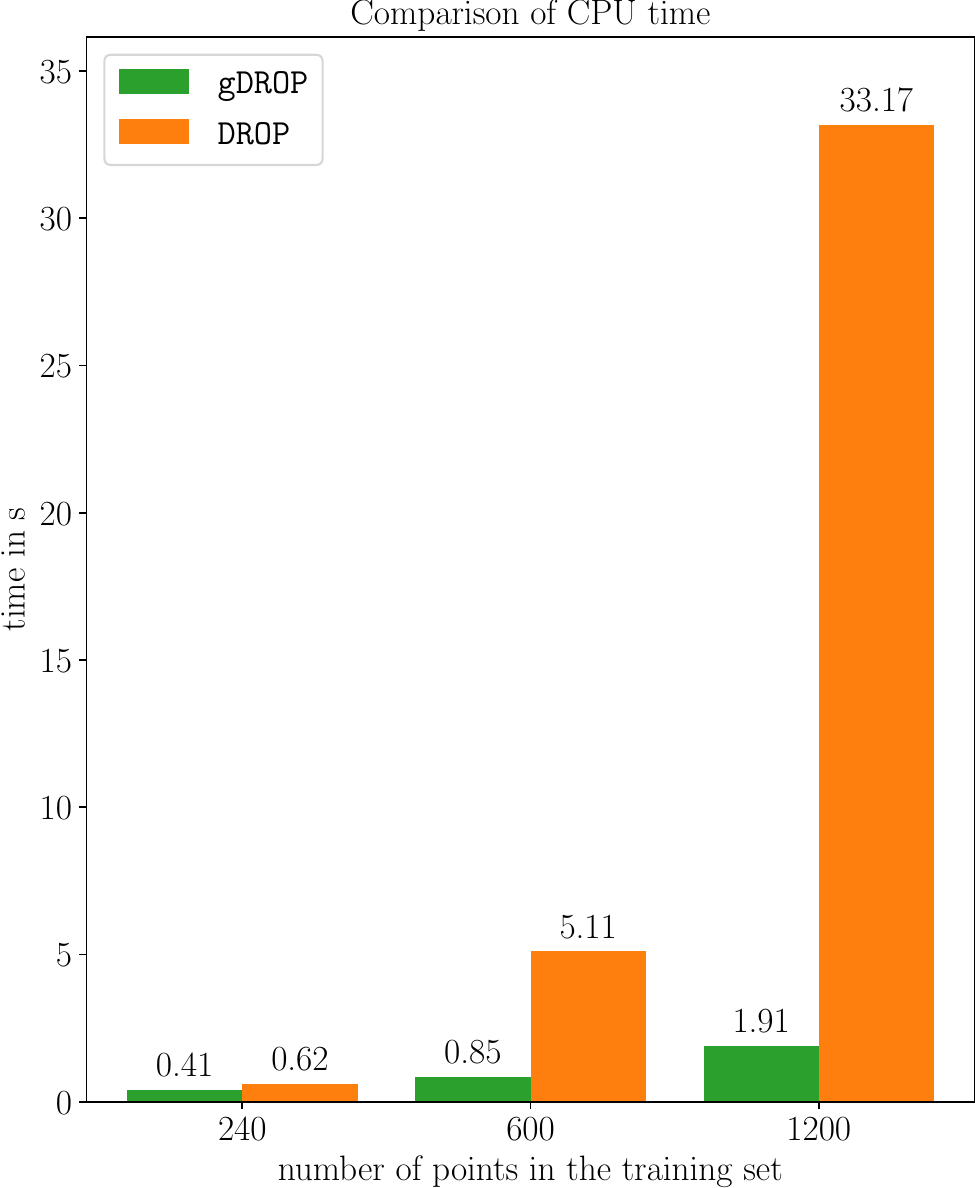}
    \caption{A comparison of the computational costs (CPU time) to construct \acp{ROM}.}
    \label{fig:delay_time}
\end{subfigure}
\caption{Heating rod model: A comparison of the performance of \gdrop~~and \drop~.}
\end{figure}

\subsection{Fading memory heat equation}
For our next example we consider a heat equation with fading memory, which is an integro-differential system of order $n = 16~384$ taken from~\cite{morBre16}. The transfer function is given by
\begin{align*}
\bH(s) = \*C\left(s\*I-\*A+\frac{1}{s+\gamma}\*A\right)^{-1}\*B, 
\end{align*}
where $s = \jmath\omega$ with $\omega \in \R $, $\gamma = 1.05$, $\*A \in \mathbb{R}^{n \times n}$, $\*I \in \mathbb{R}^{n \times n}$ is the identity matrix, $\*B \in \mathbb{R}^{n \times 1}$, and $\*C \in \mathbb{R}^{1 \times n}$. For this example, we consider the range of frequency $\omega \in [10^{-2},10^4]$.
\paragraph{Results:} We perform similar experiments as in the previous two examples. We consider $100$ points in our training set and then employ \gdrop~ and \drop~ to determine reachable and observable subspaces. For \drop~, we are required solve large-scale linear systems for all $100$  points in our training set, whereas \gdrop~ actively selects only $10$ points to approximately obtain the same subspace information. Next, we construct \acp{ROM} of order $r=7$ using both approaches. Their comparison with the high-fidelity models is shown in \Cref{fig:heat_TF}, which is expected to have similar performance, but \gdrop~ has a computational advantage. To stress this point more, we investigate the computational costs for both these methods by considering different numbers of points ($\cN = \{25, 50, 100, 200\}$) in our training set. The result is reported in \Cref{fig:heat_time}. It clearly shows the computational advantage of \gdrop~, since it requires fewer large-scale linear solves. More precisely, when $\cN = 200$, \gdrop~ is over $10$x faster compared to \drop~.

\begin{figure}[tb]
\begin{subfigure}[t]{0.6\textwidth}
    \includegraphics[width = 0.965\textwidth]{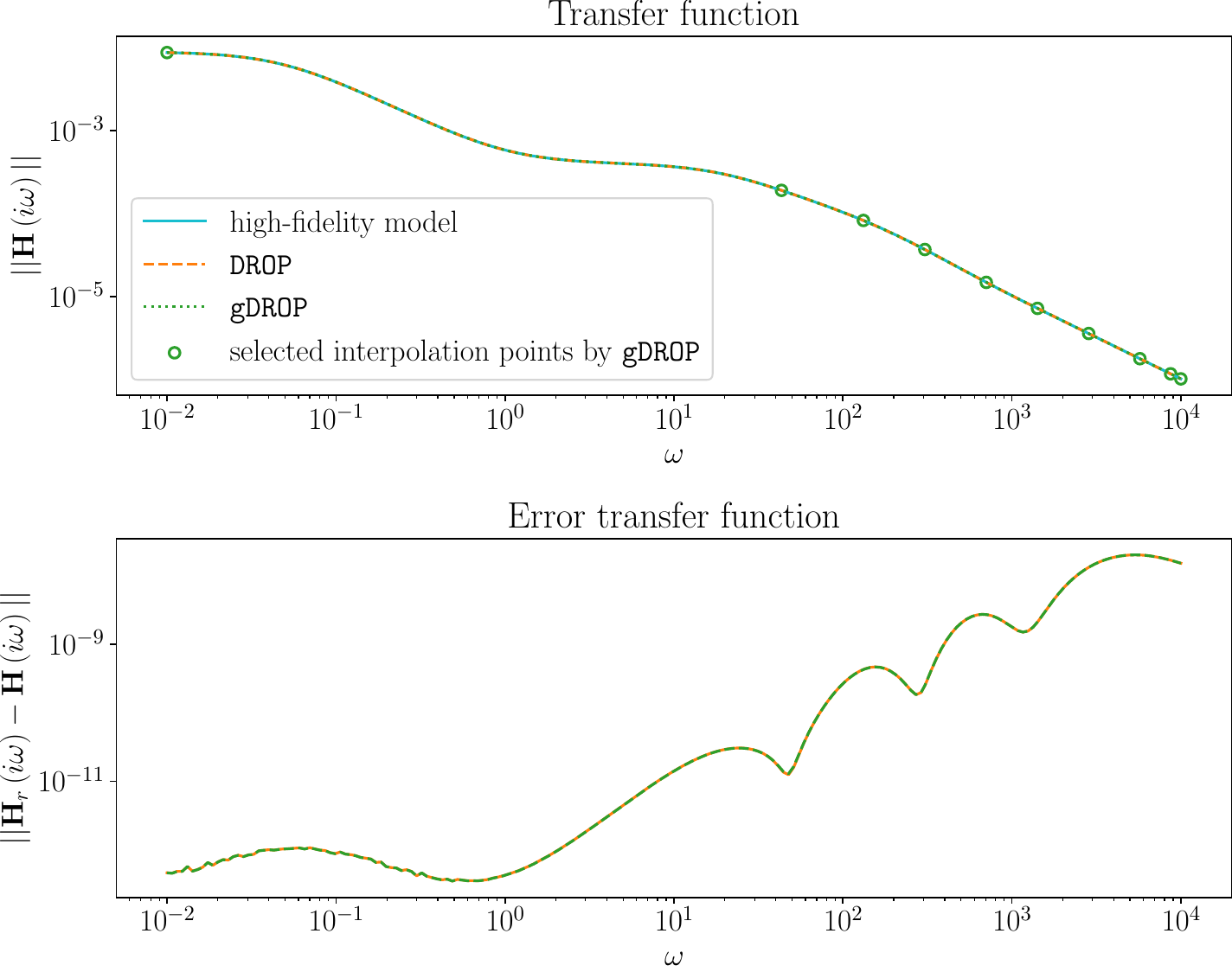}
    \caption{A comparison of the transfer functions of the high-fidelity and \acp{ROM} of order $r=7$.}
    \label{fig:heat_TF}
\end{subfigure}
\begin{subfigure}[t]{0.395\textwidth}
    \includegraphics[width = 0.945\textwidth]{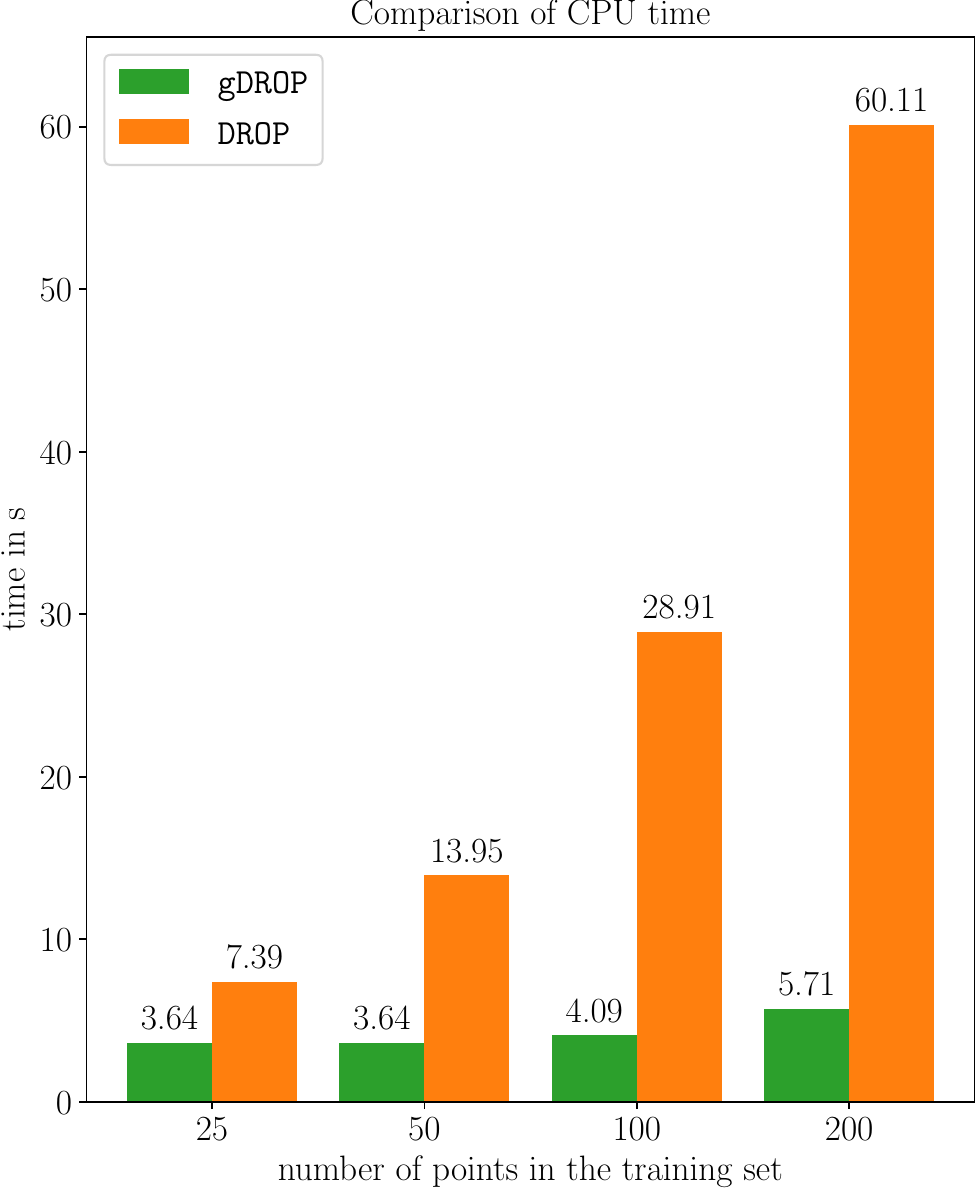}
    \caption{A comparison of the computational costs (CPU time) to construct \acp{ROM}.}
    \label{fig:heat_time}
\end{subfigure}
\caption{Heat Equation: A comparison of the performance of \gdrop~~and \drop~.}
\end{figure}

\subsection{Butterfly Gyroscope}\label{subsec:BF}
The last high-fidelity benchmark model we consider is the butterfly gyroscope model. It is a second-order system of order $ n = 17~361$ with one input and 12 outputs. For further information, we refer to  \cite{morwiki_gyro}. The model is represented by
\begin{equation}\label{eq:BG_equation}
\begin{aligned}
  \*M\ddot{\*x}(t) + \*E\dot{\*x}(t) + \*K\*x(t) &= \*B\*u(t),\\
    \*y(t) &= \*C\*x(t),  
\end{aligned}
\end{equation}
where $\*M, \*E,\, \*K \in \mathbb{R}^{n \times n}$, $\*B \in \mathbb{R}^{n\times 1}$, and $\*C \in \mathbb{R}^{12 \times n}$. Note that $\*M$ is the mass matrix, $\*K$ is the stiffness matrix, and  the damping matrix  $\*E$ is modeled as $\*E=\beta \*K$ with $\beta = 10^{-6}$. The transfer function of the system \eqref{eq:BG_equation} is given by
\begin{equation*}
    \*H(s) = \*C(s^2\*M+s\*E+\*K)^{-1}\*B, 
\end{equation*}
where $s = \jmath\omega$, and $\omega \in \left[10^4,10^6\right]$ is the considered frequency range.
\paragraph{Results:} Second-order models often have special structures In this example, the matrices $\bM, \bK$, and $\bE$ are symmetric positive definite. Therefore, we are interested in preserving these properties in the \acp{ROM}. This can be achieved when \acp{ROM} are constructed using the Galerkin projection of \eqref{eq:BG_equation}; hence, we apply \drop~ and \gdrop~  focusing only on the reachable subspace encoded by the matrix $\bV$ in \eqref{eq:intpol}. Then, we set $\bW = \bV$. The rest of the procedure remains similar to the previous examples. We consider $\cN = 100$ points in our training set, followed by first employing \drop~ and \gdrop~ to determine the reachable subspace. For \drop~, we use all $100$ points from the training set, whereas \gdrop~ actively selects only  $13$ points, which are sufficient to recover the same information. Next, we construct \acp{ROM} of order $r = 18$ using a Galerkin projection, which are expected to provide similar performance. This is precisely what we observe when they are compared; see \Cref{fig:butterfly_TF}. 

Once again, \gdrop~ becomes favorable when the computational cost is taken into account. Therefore, we report the CPU time for both \gdrop~ and \drop~, when the number of points in our training set is varied. For  points $\cN = \{25, 50, 100, 200\}$, we indicate the CPU time in \Cref{fig:butterfly_time} using a bar-plot. We again notice that \drop~ becomes quite expensive as the number of points increases, particularly when the model is of very high fidelity, which is the case for this example. On the other hand, the cost for \gdrop~~increases only moderately with the number of points. Specifically, for $\cN = 200$, \gdrop~~is approximately $12$x faster, showing the superior performance of \gdrop~ to construct \acp{ROM} for very high-fidelity models efficiently.

\begin{figure}[tb]
\begin{subfigure}[t]{0.6\textwidth}
    \includegraphics[width = 0.965\textwidth]{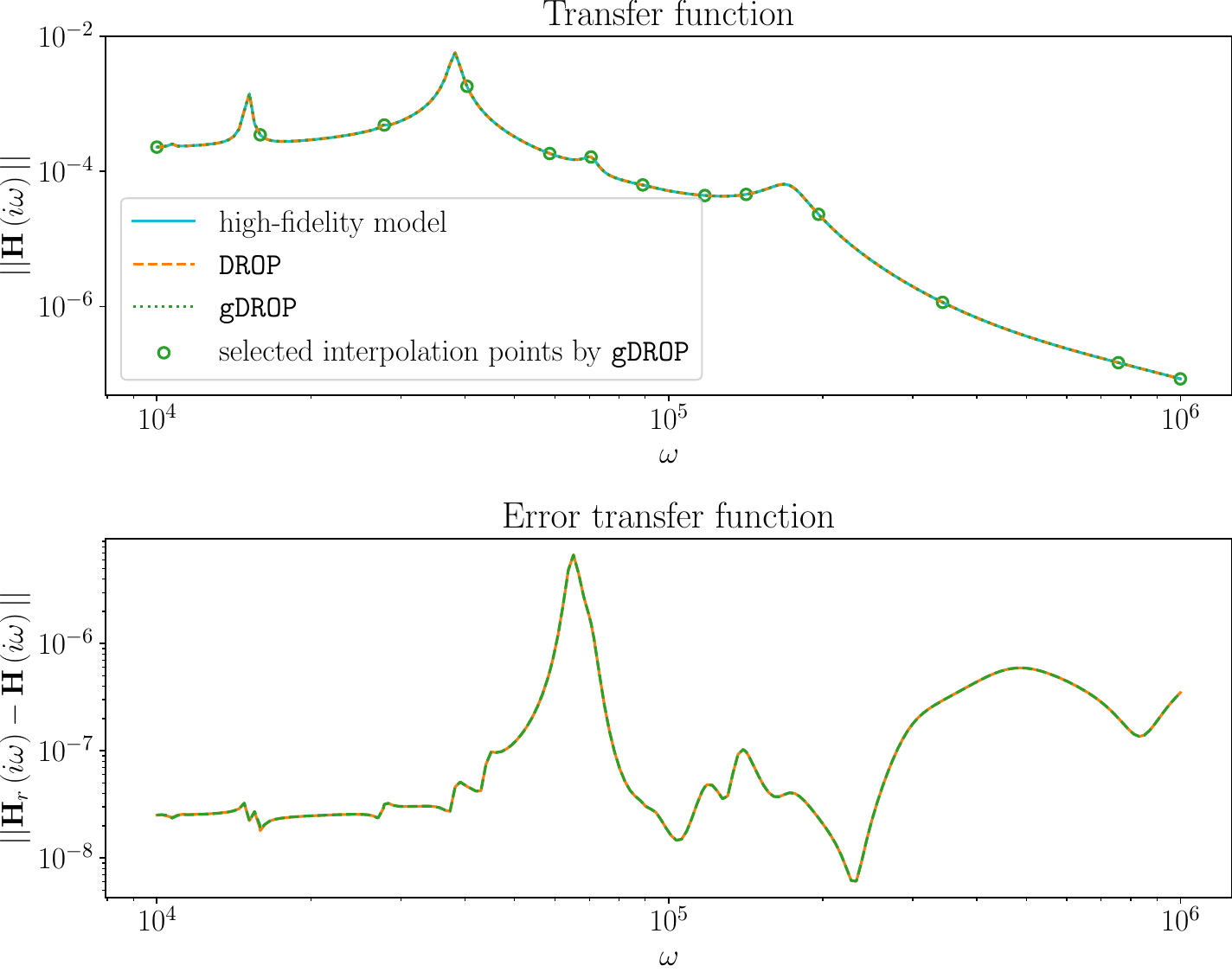}
    \caption{A comparison of the transfer functions of the high-fidelity and \acp{ROM} of order $r=18$.}
    \label{fig:butterfly_TF}
\end{subfigure}
\begin{subfigure}[t]{0.395\textwidth}
    \includegraphics[width = 0.945\textwidth]{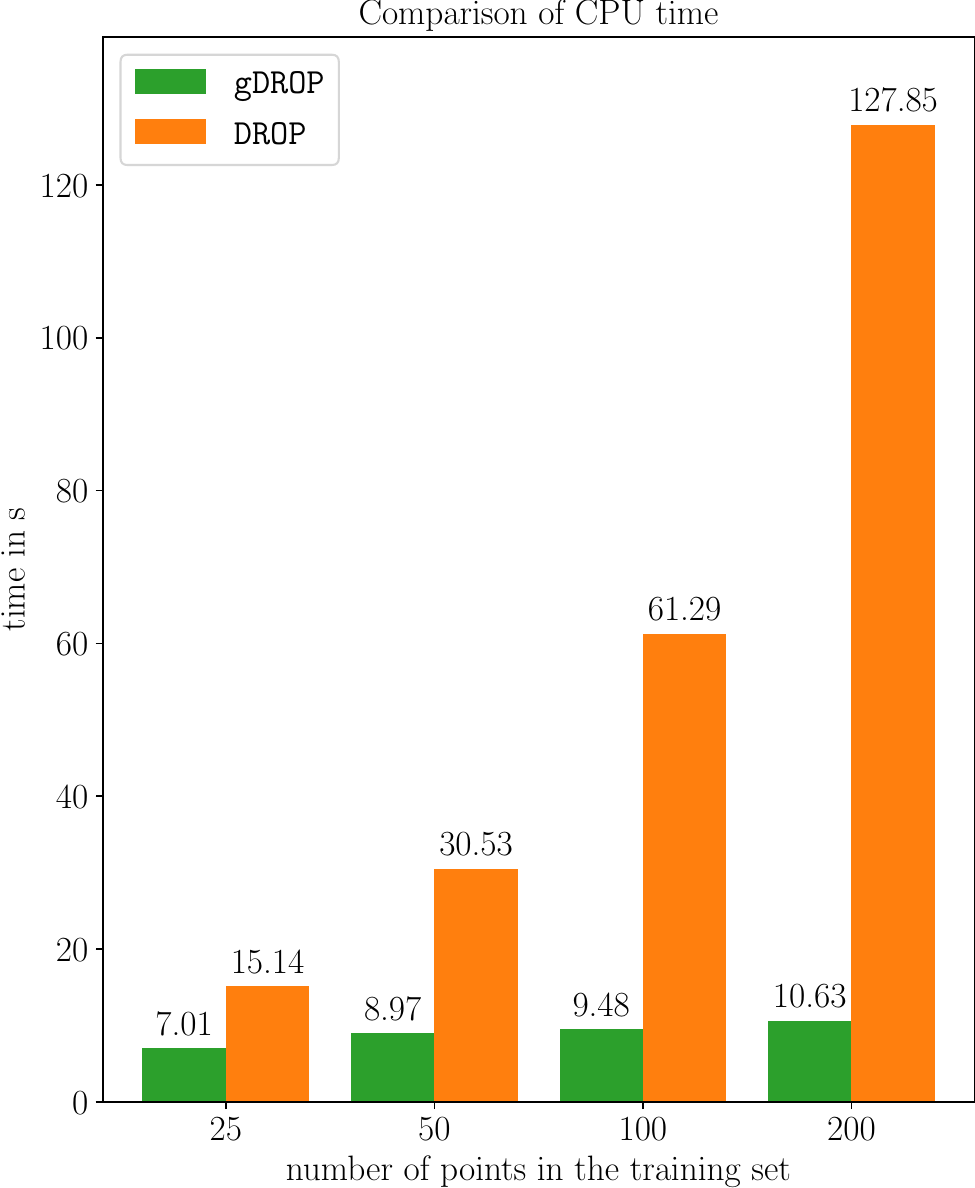}
    \caption{A comparison of the computational costs (CPU time) to construct \acp{ROM}.}
    \label{fig:butterfly_time}
\end{subfigure}
\caption{Butterfly Gyroscope: A comparison of the performance of \gdrop~ and \drop~.}
\end{figure}


\section*{Conclusions}
We have proposed an active sampling procedure enabling efficient computation of the dominant reachable and observable subspaces for linear structured systems, allowing us to compute reduced-order surrogate models. For a large training set of interpolation points, reachable and observable subspaces can be determined by solving large-scale linear solves, which typically require as many as the number of interpolation points in the training set. This imposes a computational burden, especially in the case of high-fidelity systems. Towards developing an active sampling strategy,  we have first cast the identification of the subspaces as a solution to generalized Sylvester equations, exhibiting an interpolatory structure. By utilizing this particular connection between interpolation points and the columns of the solution of the matrix equation, we determine the active points that most likely  provide us with relevant information about the desired subspaces. We actively sample points from the training set in an iterative manner and obtain solutions of the matrix equations in low-rank forms, encoding the subspaces accurately. Particular attention was paid to the computational aspects, and we have shown how the low-rank form of the solutions can significantly speed up the process of obtaining reduced-order models. 
In addition, we have demonstrated the efficiency of the active sampling strategy to determine a few important points from the training set to obtain subspace information. 
There, we observed a speed-up of more than factor $12$ using the proposed active sampling strategy compared to the approach where a uniform sampling of interpolation points is chosen. In our future work, we would like to use the active sampling scheme for parametric and nonlinear systems \cite{morBenG21,morGoyPB23}. For this, one can potentially combine the presented active sampling strategy with the idea of radial basis proposed, e.g. in \cite{morCheFB22}.

\section*{Data Availability}%
A repository containing an implementation of \Cref{alg:greedy} as well as the code to reproduce the presented results and figures can be found on Zenodo \cite{reddig_2024_11980821}.
\section*{Acknowledgments}%
\addcontentsline{toc}{section}{Acknowledgments}
We would like to express our gratitude to Dr.\ Sridhar Chellappa for several fruitful discussions on this work.


\addcontentsline{toc}{section}{References}
\bibliographystyle{siamplain}
\bibliography{mor}
  
\end{document}